\definecolor{mydarkblue}{rgb}{0,0.08,0.45}
\definecolor{myfavblue}{rgb}{0.1176, 0.392, 1.0}
\newtheorem{theorem}{Theorem}
\newtheorem*{theorem*}{Theorem}
\newcommand{\tr}{\textnormal{tr}}
\newcommand\E{\mathbb{E}}
\newcommand\R{\mathbb{R}}
\renewcommand{\epsilon}{\varepsilon}
\newcommand*{\tran}{^{\mkern-1.5mu\mathsf{T}}}
\newcommand{\vx}{\mathbf{x}}
\newcommand\bigO{\mathcal{O}}
\newcommand{\va}{\mathbf{a}}
\newcommand{\vh}{\mathbf{h}}
\newcommand{\vz}{\mathbf{z}}
\newcommand{\vzero}{\bf{0}}
\DeclareMathOperator{\KLop}{KL}
\newcommand{\KL}[2]{\KLop \left(#1 \middle \| #2 \right)}
\newcommand{\latent}{\vz}
\newcommand{\hidden}{\vh}
\newcommand{\obs}{\vx}
\newcommand{\sol}{\vz}  %
\newcommand{\solvefunc}{\textnormal{ODESolve}}
\newcommand{\tstart}{{t_\textnormal{0}}}
\newcommand{\tend}{{t_\textnormal{1}}}
\newcommand{\lograte}{\lambda}%
\newcommand{\method}{Latent ODE}
\newcommand{\cnfx}{\sol}
\newcommand{\adj}{\va}
\title{Neural Ordinary Differential Equations}
\author{
  Ricky T. Q. Chen*, Yulia Rubanova*, Jesse Bettencourt*, David Duvenaud \\
  University of Toronto, Vector Institute\\
  \texttt{ \{rtqichen, rubanova, jessebett, duvenaud\}@cs.toronto.edu} \\
}
\begin{document}
\maketitle

\definecolor{jessecomment}{HTML}{85144b}
\newcommand{\JB}[1]{{\bf \color{jessecomment}JB: [#1]}}

\begin{abstract}
We introduce a new family of deep neural network models.
Instead of specifying a discrete sequence of hidden layers, we parameterize the derivative of the hidden state using a neural network.
The output of the network is computed using a black-box differential equation solver.
These continuous-depth models have constant memory cost, adapt their evaluation strategy to each input, and can explicitly trade numerical precision for speed.
We demonstrate these properties in continuous-depth residual networks and continuous-time latent variable models.
We also construct continuous normalizing flows, a generative model that can train by maximum likelihood, without partitioning or ordering the data dimensions.
For training, we show how to scalably backpropagate through any ODE solver, without access to its internal operations.
This allows  end-to-end training of ODEs within larger models.%
\end{abstract}

\begin{wrapfigure}[18]{r}{0.45\textwidth}
\centering
\vspace{-1cm}%
\hspace{-3mm}%
\begin{tabular}{cc}
\setlength{\tabcolsep}{0pt}
\hspace{2mm} Residual Network & \hspace{2mm} ODE Network \\
\includegraphics[width=0.2\textwidth, clip, trim=4mm 4mm 4mm 4mm]{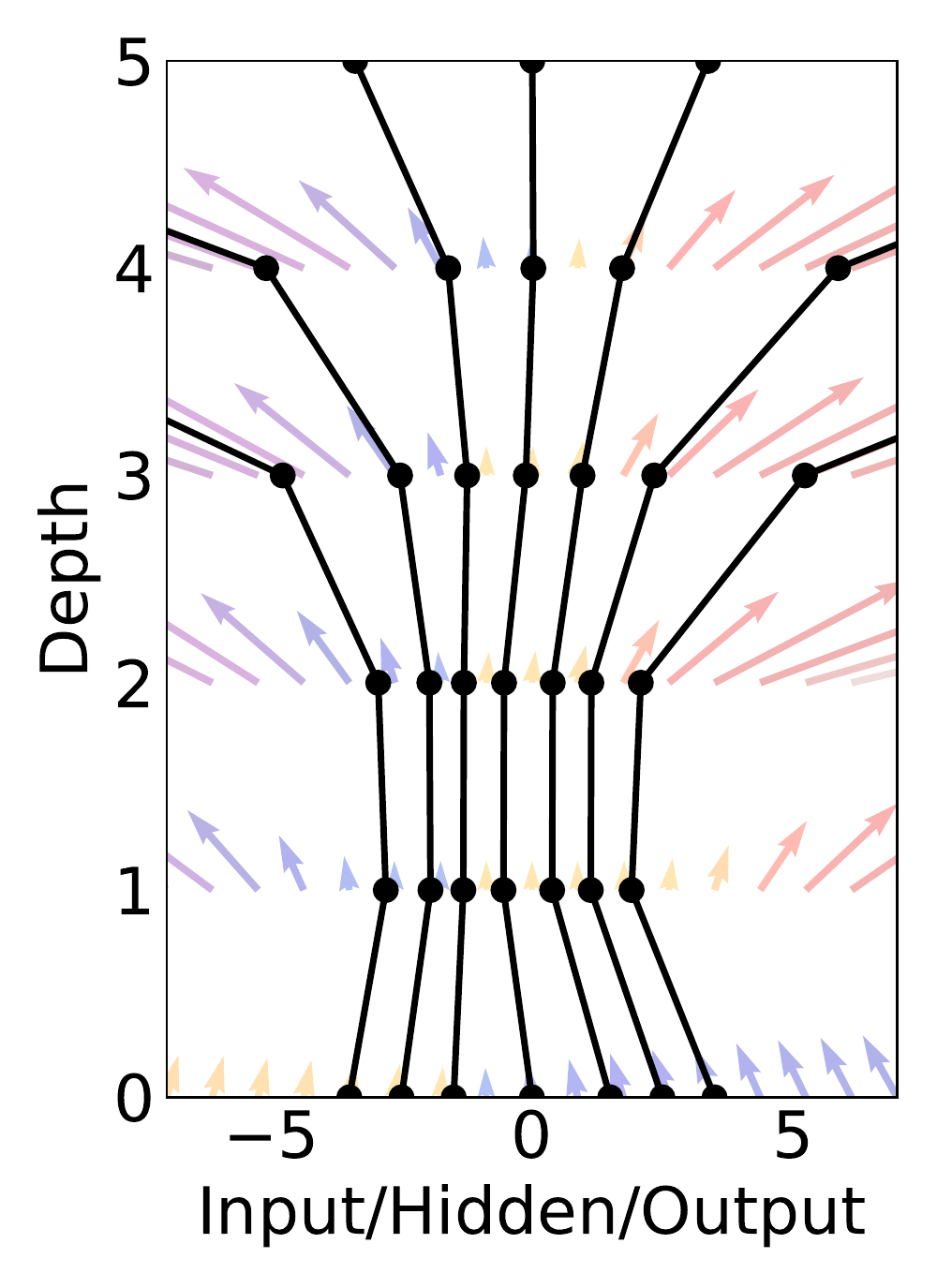} &
\includegraphics[width=0.2\textwidth, clip, trim=4mm 4mm 4mm 4mm]{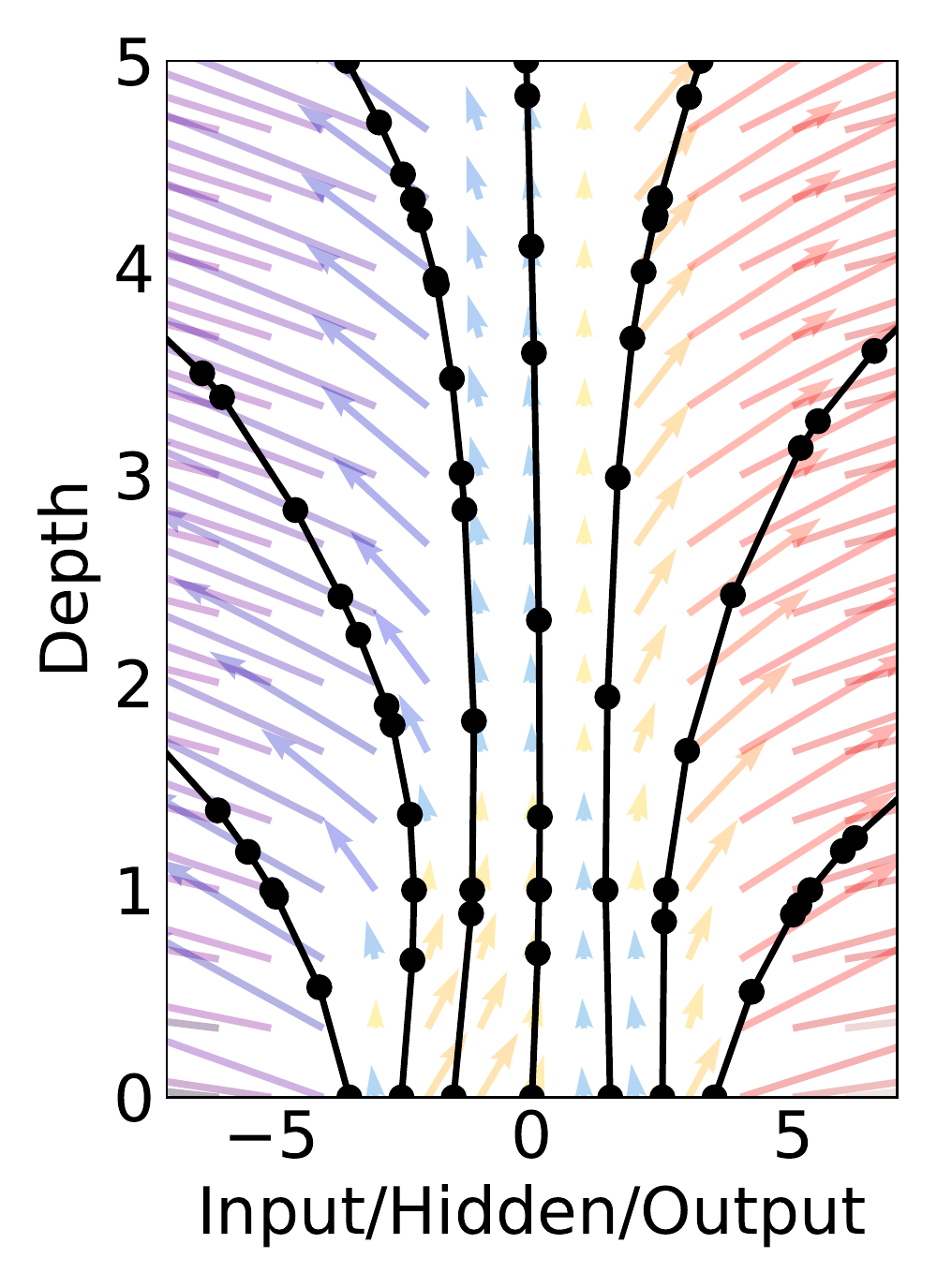}
\end{tabular}
\caption{\emph{Left:} A Residual network defines a discrete sequence of finite transformations.
\emph{Right:}
A ODE network defines a vector field, which continuously transforms the state.
\emph{Both:}
Circles represent evaluation locations.}
\label{fig:fig1}
\end{wrapfigure}

\section{Introduction}
Models such as residual networks, recurrent neural network decoders, and normalizing flows build complicated transformations by composing a sequence of transformations to a hidden state:
\begin{align}\label{eq:res}
\hidden_{t+1} = \hidden_t + f(\hidden_t, \theta_t)%
\end{align}
where $t \in \{0 \dots T\}$ and $\hidden_t\in \R^D$.
These iterative updates can be seen as an Euler discretization of a continuous transformation~\citep{lu2017beyond,haber2017stable,ruthotto2018deep}.

What happens as we add more layers and take smaller steps?
In the limit, we parameterize the continuous dynamics of hidden units using an ordinary differential equation (ODE) specified by a neural network:
\begin{align}\label{eq:ivp}
\frac{d\hidden(t)}{dt} = f(\hidden(t), t, \theta) %
\end{align}
Starting from the input layer $\hidden(0)$, we can define the output layer $\hidden(T)$ to be the solution to this ODE initial value problem at some time $T$.
This value can be computed by a black-box differential equation solver, which evaluates the hidden unit dynamics $f$ wherever necessary to determine the solution with the desired accuracy.
Figure~\ref{fig:fig1} contrasts these two approaches.

Defining and evaluating models using ODE solvers has several benefits:

\paragraph{Memory efficiency}
In Section~\ref{sec:rmd}, we show how to compute gradients of a scalar-valued loss with respect to all inputs of any ODE solver, \emph{without backpropagating through the operations of the solver}.
Not storing any intermediate quantities of the forward pass allows us to train our models with constant memory cost as a function of depth, a major bottleneck of training deep models.

\paragraph{Adaptive computation}
Euler's method is perhaps the simplest method for solving ODEs.
There have since been more than 120 years of development of efficient and accurate ODE solvers~\citep{Runge, Kutta, hairer87:_solvin_ordin_differ_equat_i}.
Modern ODE solvers provide guarantees about the growth of approximation error, monitor the level of error, and adapt their evaluation strategy on the fly to achieve the requested level of accuracy.
This allows the cost of evaluating a model to scale with problem complexity.
After training, accuracy can be reduced for real-time or low-power applications.


\paragraph{Scalable and invertible normalizing flows}
An unexpected side-benefit of continuous transformations is that the change of variables formula becomes easier to compute.
In Section~\ref{sec:cnf}, we derive this result and use it to construct a new class of invertible density models that avoids the single-unit bottleneck of normalizing flows, and can be trained directly by maximum likelihood.

\paragraph{Continuous time-series models}
Unlike recurrent neural networks, which require discretizing observation and emission intervals, continuously-defined dynamics can naturally incorporate data which arrives at arbitrary times.
In Section~\ref{sec:lode}, we construct and demonstrate such a model.

\section{Reverse-mode automatic differentiation of ODE solutions}
\label{sec:rmd}
The main technical difficulty in training continuous-depth networks is performing reverse-mode differentiation (also known as backpropagation) through the ODE solver.
Differentiating through the operations of the forward pass is straightforward, but incurs a high memory cost and introduces additional numerical error.

We treat the ODE solver as a black box, and compute gradients using the \emph{adjoint sensitivity method}~\citep{pontryagin1962mathematical}.
This approach computes gradients by solving a second, augmented ODE backwards in time, and is applicable to all ODE solvers.
This approach scales linearly with problem size, has low memory cost, and explicitly controls numerical error.

Consider optimizing a scalar-valued loss function $L()$, whose input is the result of an ODE solver:
\begin{align}
L(\sol(\tend)) = L \left(\sol(\tstart) + \int_{\tstart}^{\tend} f(\sol(t), t, \theta) dt \right) = L \left( \solvefunc( \sol(\tstart), f, \tstart, \tend, \theta) \right)
\end{align}
\begin{wrapfigure}[23]{l}{0.5\textwidth}%
	\vspace{-3mm}
	\centering
	\includegraphics[width=0.5\textwidth]{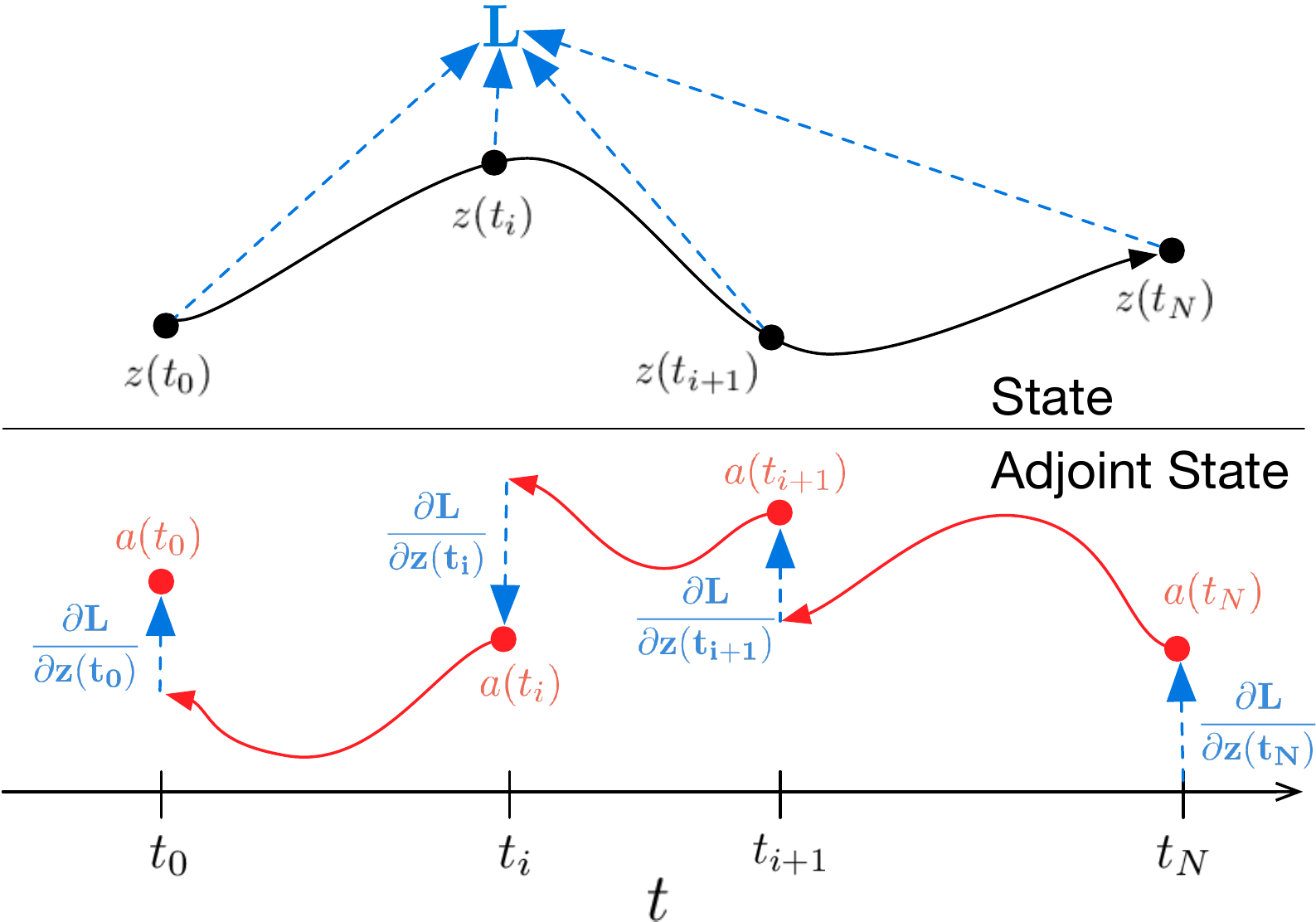}
	\caption{Reverse-mode differentiation of an ODE solution.
	The adjoint sensitivity method solves an augmented ODE backwards in time.
	The augmented system contains both the original state and the sensitivity of the loss with respect to the state.
	If the loss depends directly on the state at multiple observation times, the adjoint state must be updated in the direction of the partial derivative of the loss with respect to each observation.}
	\label{fig:AdjointFig}
\end{wrapfigure}%
To optimize $L$, we require gradients with respect to $\theta$.
The first step is to determining how the gradient of the loss depends on the hidden state $\sol(t)$ at each instant.
This quantity is called the \emph{adjoint} $\adj(t) = \nicefrac{\partial{L}}{\partial \sol(t)}$.
Its dynamics are given by another ODE, which can be thought of as the instantaneous analog of the chain rule:
\begin{align}
\frac{d \adj(t)}{d t} = - \adj(t)\tran{} \frac{\partial f(\sol(t), t, \theta)}{\partial \sol}
\label{eq:adj}
\end{align}
We can compute $\nicefrac{\partial L}{\partial \sol(\tstart)}$ by another call to an ODE solver.
This solver must run backwards, starting from the initial value of  $\nicefrac{\partial L}{\partial \sol(\tend)}$.
One complication is that solving this ODE requires the knowing value of $\sol(t)$ along its entire trajectory.
However, we can simply recompute $\sol(t)$ backwards in time together with the adjoint, starting from its final value $\sol(\tend)$.

Computing the gradients with respect to the parameters $\theta$ requires evaluating a third integral, which depends on both $\sol(t)$ and $\adj(t)$:
\begin{align}\label{eq:params_diffeq}
\frac{d L}{d \theta} = -\int^\tstart_\tend \adj(t)\tran{} \frac{\partial f(\sol(t), t, \theta)}{\partial \theta} dt
\end{align}
The vector-Jacobian products $\adj(t)^T\frac{\partial f}{\partial \sol}$ and $\adj(t)^T\frac{\partial f}{\partial \theta}$ in \eqref{eq:adj} and \eqref{eq:params_diffeq} can be efficiently evaluated by automatic differentiation, at a time cost similar to that of evaluating $f$.
All integrals for solving $\sol$, $\adj$ and $\frac{\partial L}{\partial \theta}$ can be computed in a single call to an ODE solver, which concatenates the original state, the adjoint, and the other partial derivatives into a single vector.
Algorithm~\ref{algo1} shows how to construct the necessary dynamics, and call an ODE solver to compute all gradients at once.

\begin{algorithm}[h]
\caption{Reverse-mode derivative of an ODE initial value problem}
\label{algo1}
\begin{algorithmic}
\Require dynamics parameters $\theta$, start time $\tstart$, stop time $\tend$, final state $\sol(\tend)$, loss gradient $\nicefrac{\partial L}{\partial \sol(\tend)}$
\State $s_0 = [\sol(\tend), \frac{\partial L}{\partial \sol(\tend)}, {\vzero}_{|\theta|}]$ \Comment{Define initial augmented state}
\Function{\textnormal{aug\_dynamics}}{$[\sol(t), \adj(t), \cdot], t, \theta$}: \Comment{Define dynamics on augmented state}
\State \textbf{return} $[f(\sol(t), t, \theta), -\adj(t)\tran{} \frac{\partial f}{\partial \sol}, -\adj(t)\tran{} \frac{\partial f}{\partial \theta}]$ \Comment{Compute vector-Jacobian products}
\EndFunction %
\State $[\sol(t_0), \frac{\partial L}{\partial \sol(t_0)}, \frac{\partial L}{\partial \theta}] = \solvefunc(s_0, \textnormal{aug\_dynamics}, \tend, \tstart, \theta)$ \Comment{Solve reverse-time ODE}
\Ensure $\frac{\partial L}{\partial \sol(t_0)}, \frac{\partial L}{\partial \theta}$ \Comment{Return gradients}
\end{algorithmic}
\end{algorithm}

Most ODE solvers have the option to output the state $\sol(t)$ at multiple times.
When the loss depends on these intermediate states, the reverse-mode derivative must be broken into a sequence of separate solves, one between each consecutive pair of output times (Figure~\ref{fig:AdjointFig}).
At each observation, the adjoint must be adjusted in the direction of the corresponding partial derivative $\nicefrac{\partial L}{\partial \sol(t_i)}$.

The results above extend those of \citet[section 2.4.2]{stapor2018optimization}.
An extended version of Algorithm~\ref{algo1} including derivatives w.r.t.\ $\tstart$ and $\tend$ can be found in Appendix~\ref{sec:full-alg}.
Detailed derivations are provided in Appendix~\ref{sec:modern_adj_proof}.
Appendix~\ref{autograd code} provides Python code which computes all derivatives for \texttt{scipy.integrate.odeint} by extending the \href{https://www.github.com/HIPS/autograd}{\texttt{autograd}} automatic differentiation package.
This code also supports all higher-order derivatives.
We have since released a PyTorch~\citep{paszke2017automatic} implementation, including GPU-based implementations of several standard ODE solvers at \href{https://github.com/rtqichen/torchdiffeq}{\texttt{github.com/rtqichen/torchdiffeq}}.

\section{Replacing residual networks with ODEs for supervised learning}
\label{sec:odenets}
In this section, we experimentally investigate the training of neural ODEs for supervised learning.

\paragraph{Software}
To solve ODE initial value problems numerically, we use the implicit Adams method implemented in LSODE and VODE and interfaced through the \texttt{scipy.integrate} package.
Being an implicit method, it has better guarantees than explicit methods such as Runge-Kutta but requires solving a nonlinear optimization problem at every step.
This setup makes direct backpropagation through the integrator difficult.
We implement the adjoint sensitivity method in Python's \texttt{autograd} framework~\citep{maclaurin2015autograd}.
For the experiments in this section, we evaluated the hidden state dynamics and their derivatives on the GPU using Tensorflow, which were then called from the Fortran ODE solvers, which were called from Python \texttt{autograd} code.

\paragraph{Model Architectures}
\begin{wraptable}[8]{r}{0.5\linewidth}
	\vspace{-6mm}
	\caption{Performance on MNIST. $^\dagger$From \citet{lecun1998gradient}.}
	\label{tab:odenet}
	\centering
	\scalebox{0.8}{
		\begin{tabular}{lcccc}
			\toprule
			\multicolumn{1}{l}{} & Test Error & \# Params & Memory & Time \\
			\midrule
			\multicolumn{1}{l}{1-Layer MLP$^\dagger$} & 1.60\% & 0.24 M & - & - \\
			\multicolumn{1}{l}{ResNet} & 0.41\% & 0.60 M & $\mathcal{O}(L)$ & $\mathcal{O}(L)$ \\
			\multicolumn{1}{l}{RK-Net} & 0.47\% & 0.22 M & $\mathcal{O}(\tilde{L})$ & $\mathcal{O}(\tilde{L})$ \\
			\multicolumn{1}{l}{ODE-Net} & 0.42\% & 0.22 M & $\boldsymbol{\mathcal{O}(1)}$ & $\mathcal{O}(\tilde{L})$ \\
			\bottomrule
		\end{tabular}
	}
\end{wraptable}
We experiment with a small residual network which downsamples the input twice then applies 6 standard residual blocks~\cite{he2016identity}, which are replaced by an ODESolve module in the ODE-Net variant.
We also test a network with the same architecture but where gradients are backpropagated directly through a Runge-Kutta integrator, referred to as RK-Net.
Table~\ref{tab:odenet} shows test error, number of parameters, and memory cost. $L$ denotes the number of layers in the ResNet, and $\tilde{L}$ is the number of function evaluations that the ODE solver requests in a single forward pass, which can be interpreted as an implicit number of layers.
We find that ODE-Nets and RK-Nets can achieve around the same performance as the ResNet.

\paragraph{Error Control in ODE-Nets}
ODE solvers can approximately ensure that the output is within a given tolerance of the true solution.
Changing this tolerance changes the behavior of the network.
We first verify that error can indeed be controlled in Figure~\ref{fig:odenet-plots}a.
The time spent by the forward call is proportional to the number of function evaluations (Figure~\ref{fig:odenet-plots}b), so tuning the tolerance gives us a trade-off between accuracy and computational cost.
One could train with high accuracy, but switch to a lower accuracy at test time.

\begin{figure}[h]
	\centering
	\scalebox{1.0}{
		\begin{subfigure}[b]{0.255\linewidth}
			\includegraphics[width=\linewidth, trim=10px 10px 0 0, clip]{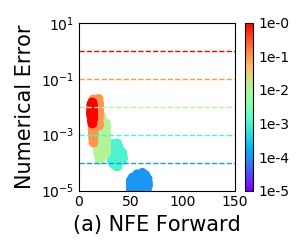}
		\end{subfigure}%
		\begin{subfigure}[b]{0.255\linewidth}
			\includegraphics[width=\linewidth, trim=10px 10px 0 0, clip]{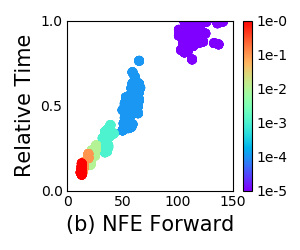}
		\end{subfigure}%
		\begin{subfigure}[b]{0.255\linewidth}
			\includegraphics[width=\linewidth, trim=7px 10px 0 0, clip]{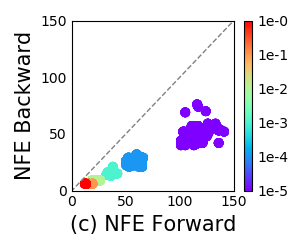}
		\end{subfigure}%
		\begin{subfigure}[b]{0.235\linewidth}
			\includegraphics[width=\linewidth, trim=9px 10px 0 0, clip]{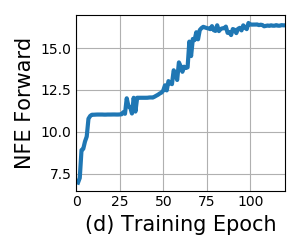}
		\end{subfigure}
	}
	\caption{Statistics of a trained ODE-Net. (NFE = number of function evaluations.)}
	\label{fig:odenet-plots}
\end{figure}

Figure~\ref{fig:odenet-plots}c) shows a surprising result: the number of evaluations in the backward pass is roughly half of the forward pass.
This suggests that the adjoint sensitivity method is not only more memory efficient, but also more computationally efficient than directly backpropagating through the integrator, because the latter approach will need to backprop through each function evaluation in the forward pass.

\paragraph{Network Depth}
It's not clear how to define the `depth` of an ODE solution.
A related quantity is the number of evaluations of the hidden state dynamics required, a detail delegated to the ODE solver and dependent on the initial state or input.
Figure~\ref{fig:odenet-plots}d shows that he number of function evaluations increases throughout training, presumably adapting to increasing complexity of the model.

\section{Continuous Normalizing Flows}
\label{sec:cnf}

The discretized equation~\eqref{eq:res} also appears in normalizing flows~\citep{rezende2015variational} and the NICE framework~\citep{dinh2014nice}.
These methods use the change of variables theorem to compute exact changes in probability if samples are transformed through a bijective function $f$:
\begin{align}
{\cnfx}_1 = f({\cnfx}_0)\; \implies \log p({\cnfx}_1) = \log p({\cnfx}_0) - \log \left| \det \frac{\partial f}{\partial {\cnfx}_0} \right|
\label{eq:change of variables}
\end{align}
An example is the planar normalizing flow~\citep{rezende2015variational}:
\begin{align}
{\cnfx}(t+1) = {\cnfx}(t) + uh(w\tran{} {\cnfx}(t) + b),\quad \log p({\cnfx}(t+1)) = \log p({\cnfx}(t)) - \log \left| 1 +  u\tran{} \frac{\partial h}{\partial {\cnfx}} \right|
\end{align}

Generally, the main bottleneck to using the change of variables formula is computing of the determinant of the Jacobian $\nicefrac{\partial f}{\partial {\cnfx}}$, which has a cubic cost in either the dimension of $\cnfx$, or the number of hidden units.
Recent work explores the tradeoff between the expressiveness of normalizing flow layers and computational cost~\citep{kingma2016improved,tomczak2016improving,berg2018sylvester}.

Surprisingly, moving from a discrete set of layers to a continuous transformation simplifies the computation of the change in normalizing constant:

\begin{theorem}[Instantaneous Change of Variables]
Let ${\cnfx}(t)$ be a finite continuous random variable with probability $p({\cnfx}(t))$ dependent on time.
Let $\frac{d{\cnfx}}{dt} = f({\cnfx}(t), t)$ be a differential equation describing a continuous-in-time transformation of ${\cnfx}(t)$.
Assuming that $f$ is uniformly Lipschitz continuous in ${\cnfx}$ and continuous in $t$, then the change in log probability also follows a differential equation,
\begin{align}
\frac{\partial \log p({\cnfx}(t))}{\partial t} = -\tr\left(\frac{d f}{d {\cnfx}(t)}\right)
\end{align}
\end{theorem}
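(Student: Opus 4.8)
The plan is to work directly from the definition of the change in log-density over an infinitesimal time step and take a limit. Fix a time $t$ and consider evolving $\cnfx$ from $t$ to $t+\epsilon$ under the flow of $f$. For small $\epsilon$ this map is a near-identity diffeomorphism, so the discrete change of variables formula \eqref{eq:change of variables} applies: writing $T_\epsilon$ for the time-$\epsilon$ flow map, $\log p(\cnfx(t+\epsilon)) = \log p(\cnfx(t)) - \log\left|\det \frac{\partial T_\epsilon}{\partial \cnfx(t)}\right|$. I would then compute the Jacobian of $T_\epsilon$ by a first-order Taylor expansion in $\epsilon$: since $\cnfx(t+\epsilon) = \cnfx(t) + \epsilon f(\cnfx(t), t) + o(\epsilon)$, we get $\frac{\partial T_\epsilon}{\partial \cnfx(t)} = \vI + \epsilon \frac{\partial f}{\partial \cnfx(t)} + o(\epsilon)$.

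The next step is to evaluate $\log\left|\det\left(\vI + \epsilon \frac{\partial f}{\partial \cnfx}\right)\right|$ to first order in $\epsilon$. Using the standard expansion $\det(\vI + \epsilon A) = 1 + \epsilon\, \tr(A) + o(\epsilon)$ (which follows from expanding the characteristic polynomial, or from Jacobi's formula $\frac{d}{d\epsilon}\det(\vI + \epsilon A)\big|_{\epsilon=0} = \tr(A)$), and then $\log(1 + \epsilon\,\tr(A) + o(\epsilon)) = \epsilon\,\tr(A) + o(\epsilon)$, we obtain
\begin{align}
\log p(\cnfx(t+\epsilon)) - \log p(\cnfx(t)) = -\epsilon\, \tr\!\left(\frac{\partial f}{\partial \cnfx(t)}\right) + o(\epsilon).
\end{align}
Dividing by $\epsilon$ and letting $\epsilon \to 0$ yields $\frac{\partial \log p(\cnfx(t))}{\partial t} = -\tr\!\left(\frac{df}{d\cnfx(t)}\right)$, which is the claim. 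For the one-sided limit to be legitimate as a two-sided derivative, I would run the same argument with $\epsilon < 0$ (backward flow), using that the flow is well-defined and smooth in both time directions.

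The main obstacle — and the place where the Lipschitz and continuity hypotheses on $f$ actually earn their keep — is justifying that all the $o(\epsilon)$ remainders are uniform enough to survive the limit, and more basically that the objects in the statement are well-defined. Uniform Lipschitz continuity in $\cnfx$ (together with continuity in $t$) is exactly what the Picard–Lindelöf theorem needs to guarantee that the initial value problem has a unique solution $\cnfx(t)$ on the relevant interval and that the flow map $T_\epsilon$ exists, is invertible (run the ODE backward), and depends smoothly enough on the initial condition for the Jacobian $\partial T_\epsilon / \partial \cnfx(t)$ to make sense. I would therefore open the proof by invoking existence–uniqueness to pin down the trajectory and the flow, then note that $p(\cnfx(t))$ evolves by the usual transport/continuity equation, and only then carry out the Taylor expansion above; the bookkeeping on remainder terms is routine once the flow is known to be a genuine differentiable family of diffeomorphisms. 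A lighter alternative, which I would mention, is to bypass the $\epsilon$-limit entirely: differentiate the identity $p(\cnfx(t), t)\,\det\frac{\partial \cnfx(t)}{\partial \cnfx(0)} = p(\cnfx(0))$ in $t$, using Jacobi's formula $\frac{d}{dt}\det J(t) = \det J(t)\,\tr\!\left(J(t)^{-1}\dot J(t)\right)$ with $J(t) = \partial\cnfx(t)/\partial\cnfx(0)$ and $\dot J(t) = \frac{\partial f}{\partial \cnfx}J(t)$, which collapses directly to the stated ODE after dividing through.
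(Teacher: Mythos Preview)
Your proposal is correct and follows essentially the same route as the paper's proof: both start from the discrete change-of-variables formula applied to the time-$\epsilon$ flow map $T_\epsilon$, Taylor-expand $T_\epsilon$ about the identity, and reduce $\log|\det(\vI + \epsilon\,\partial f/\partial \cnfx)|$ to $\epsilon\,\tr(\partial f/\partial \cnfx)$ at leading order. The only cosmetic difference is ordering: the paper first applies L'H\^opital's rule to the quotient $-\epsilon^{-1}\log|\det \partial T_\epsilon/\partial \cnfx|$ and then invokes Jacobi's formula for $\partial_\epsilon \det(\cdot)$ before Taylor-expanding, whereas you Taylor-expand first and use the scalar identity $\det(\vI+\epsilon A)=1+\epsilon\,\tr A + o(\epsilon)$ directly---your version is a touch more streamlined but uses the same ingredients at the same level of rigor.
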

Proof in Appendix~\ref{sec:proof}.
Instead of the log determinant in \eqref{eq:change of variables}, we now only require a trace operation. %
Also unlike standard finite flows, the differential equation $f$ does not need to be bijective, since if uniqueness is satisfied, then the entire transformation is automatically bijective.

As an example application of the instantaneous change of variables, we can examine the continuous analog of the planar flow, and its change in normalization constant:
\begin{align}\label{eq:cnf}
\frac{d{\cnfx}(t)}{dt} = uh(w\tran{}{\cnfx}(t) + b),\quad \frac{\partial\log p({\cnfx}(t))}{\partial t} = -u\tran{} \frac{\partial h}{\partial {\cnfx}(t)}
\end{align}
Given an initial distribution $p({\cnfx}(0))$, we can sample from $p({\cnfx}(t))$ and evaluate its density by solving this combined ODE.

\paragraph{Using multiple hidden units with linear cost}
While $\det$ is not a linear function, the trace function is, which implies $\tr(\sum_n J_n) = \sum_n \tr(J_n)$.
Thus if our dynamics is given by a sum of functions then the differential equation for the log density is also a sum:
\begin{align}
\frac{d{\cnfx}(t)}{dt} = \sum_{n=1}^M f_n({\cnfx}(t)),\quad \frac{d\log p({\cnfx}(t))}{dt} = \sum_{n=1}^M \tr\left(\frac{\partial f_n}{\partial {\cnfx}}\right)
\end{align}
This means we can cheaply evaluate flow models having many hidden units, with a cost only linear in the number of hidden units $M$.
Evaluating such `wide' flow layers using standard normalizing flows costs $\mathcal{O}(M^3)$, meaning that standard NF architectures use many layers of only a single hidden unit.

\paragraph{Time-dependent dynamics}
We can specify the parameters of a flow as a function of $t$, making the differential equation $f({\cnfx}(t), t)$ change with $t$.
This is parameterization is a kind of hypernetwork~\citep{ha2016hypernetworks}.
We also introduce a gating mechanism for each hidden unit, $\frac{d{\cnfx}}{dt} = \sum_n \sigma_n(t) f_n({\cnfx})$ where $\sigma_n(t) \in (0, 1)$ is a neural network that learns when the dynamic $f_n({\cnfx})$ should be applied.
We call these models continuous normalizing flows (CNF).

\subsection{Experiments with Continuous Normalizing Flows}
\newcommand{\nfwidth}{0.3\linewidth}%
\begin{figure}[b]
	\centering
	\begin{subfigure}[b]{0.05\linewidth}
		\hfill
	\end{subfigure}
	\begin{subfigure}[b]{0.1\linewidth}
		\hfill
	\end{subfigure}%
	\begin{subfigure}[b]{\nfwidth}
		\hspace{1mm} K=2 \hspace{0.12\linewidth} K=8 \hspace{0.12\linewidth} K=32 \hfill
	\end{subfigure}%
	\begin{subfigure}[b]{\nfwidth}
		\hspace{1mm} M=2 \hspace{0.12\linewidth} M=8 \hspace{0.08\linewidth} M=32 \hfill
	\end{subfigure}
	\begin{subfigure}[b]{0.15\linewidth}
		\hfill
	\end{subfigure}\\
	\begin{subfigure}[b]{0.01\linewidth}
		\centering
		$1$
		\vspace{1.7mm}
	\end{subfigure}
	\begin{subfigure}[b]{0.1\linewidth}
		\centering
		\includegraphics[width=0.9\linewidth]{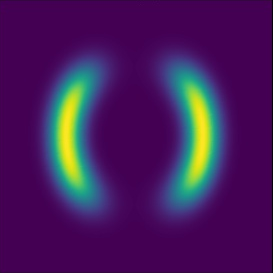}
	\end{subfigure}%
	\begin{subfigure}[b]{\nfwidth}
		\centering
		\includegraphics[width=\nfwidth]{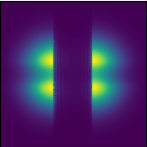}
		\includegraphics[width=\nfwidth]{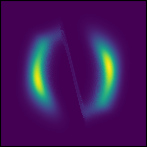}
		\includegraphics[width=\nfwidth]{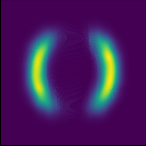}
	\end{subfigure}%
	\begin{subfigure}[b]{\nfwidth}
		\centering
		\includegraphics[width=\nfwidth]{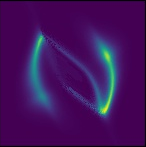}
		\includegraphics[width=\nfwidth]{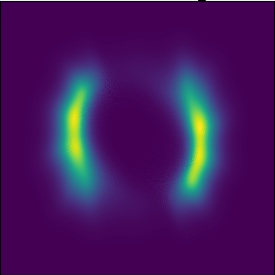}
		\includegraphics[width=\nfwidth]{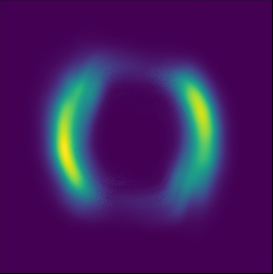}
	\end{subfigure}%
	\begin{subfigure}[b]{0.15\linewidth}
		\centering
		\includegraphics[width=0.97\linewidth, trim=10px 10px 0 0px, clip]{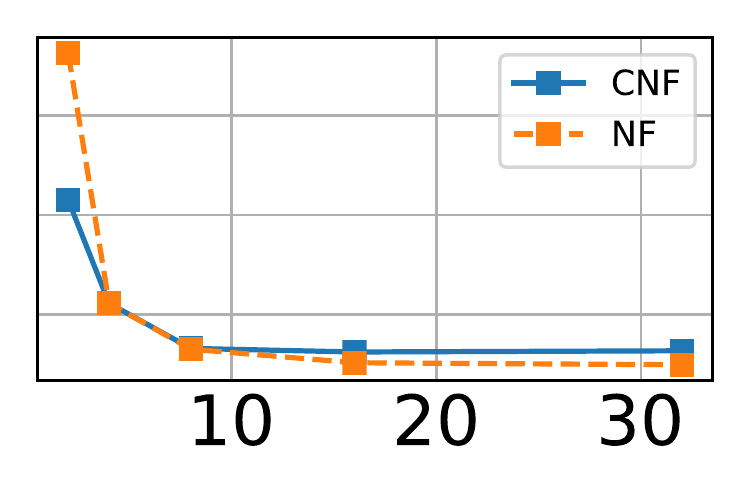}
	\end{subfigure}\\
	\begin{subfigure}[b]{0.01\linewidth}
		\centering
		$2$
		\vspace{1.7mm}
	\end{subfigure}
	\begin{subfigure}[b]{0.1\linewidth}
		\centering
		\includegraphics[width=0.9\linewidth]{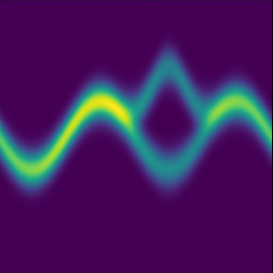}
	\end{subfigure}%
	\begin{subfigure}[b]{\nfwidth}
		\centering
		\includegraphics[width=\nfwidth]{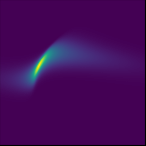}
		\includegraphics[width=\nfwidth]{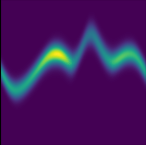}
		\includegraphics[width=\nfwidth]{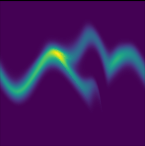}
	\end{subfigure}%
	\begin{subfigure}[b]{\nfwidth}
		\centering
		\includegraphics[width=\nfwidth]{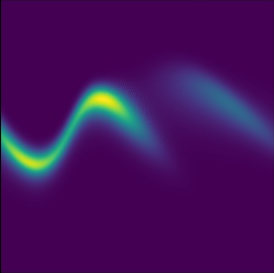}
		\includegraphics[width=\nfwidth]{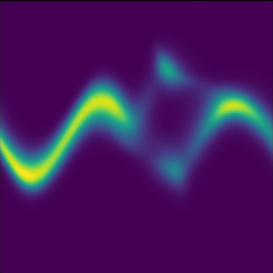}
		\includegraphics[width=\nfwidth]{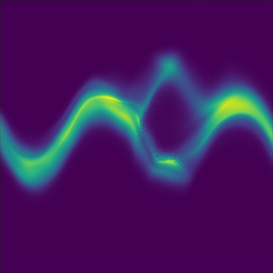}
	\end{subfigure}%
	\begin{subfigure}[b]{0.15\linewidth}
		\centering
		\includegraphics[width=0.97\linewidth, trim=10px 10px 0 0px, clip]{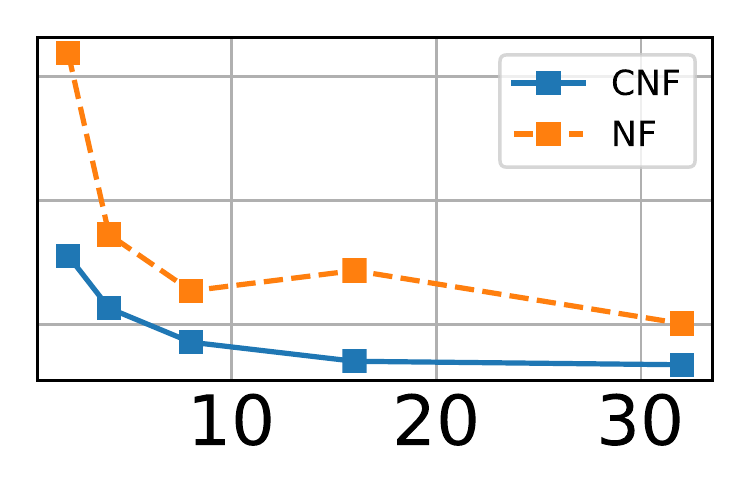}
	\end{subfigure}\\
	\begin{subfigure}[b]{0.01\linewidth}
		\centering
		$3$
		\vspace{1.7mm}
		\caption*{}
	\end{subfigure}
	\begin{subfigure}[b]{0.1\linewidth}
		\centering
		\includegraphics[width=0.9\linewidth]{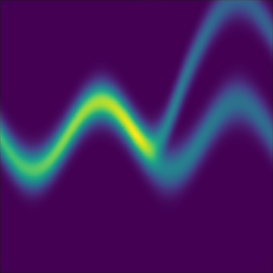}
		\caption{Target}
	\end{subfigure}%
	\begin{subfigure}[b]{\nfwidth}
		\centering
		\includegraphics[width=\nfwidth]{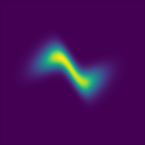}
		\includegraphics[width=\nfwidth]{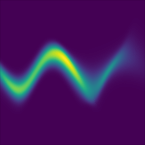}
		\includegraphics[width=\nfwidth]{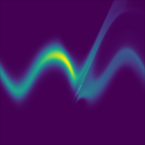}
		\caption{NF}
	\end{subfigure}%
	\begin{subfigure}[b]{\nfwidth}
		\centering
		\includegraphics[width=\nfwidth]{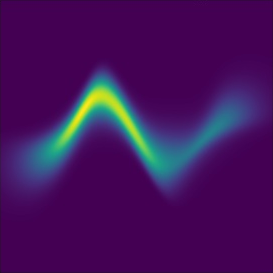}
		\includegraphics[width=\nfwidth]{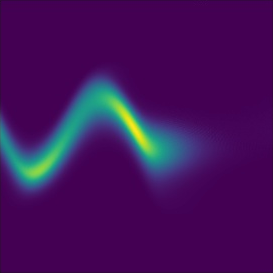}
		\includegraphics[width=\nfwidth]{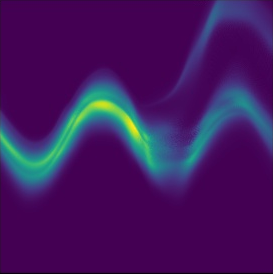}
		\caption{CNF}
	\end{subfigure}%
	\begin{subfigure}[b]{0.15\linewidth}
		\centering
		\includegraphics[width=0.97\linewidth, trim=10px 10px 0 0px, clip]{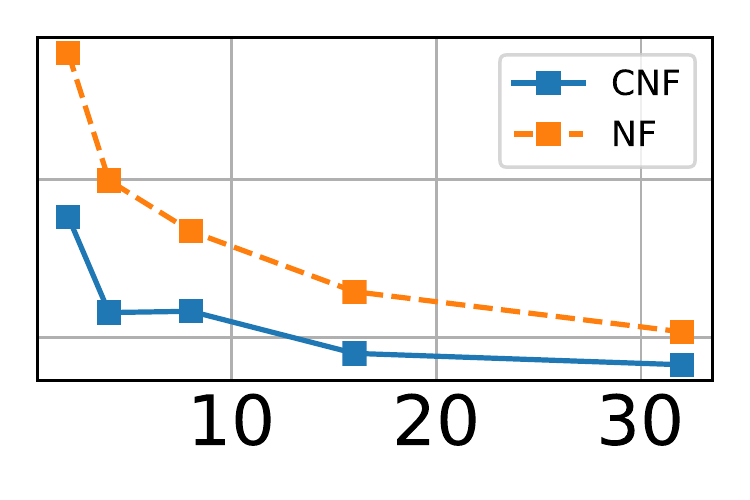}
		\caption{Loss vs. K/M}
	\end{subfigure}
	\caption{Comparison of normalizing flows versus continuous normalizing flows.
	The model capacity of normalizing flows is determined by their depth (K), while continuous normalizing flows can also increase capacity by increasing width (M), making them easier to train.}
\label{fig:cnf_toy}
\end{figure}

We first compare continuous and discrete planar flows at learning to sample from a known distribution.
We show that a planar CNF with $M$ hidden units can be at least as expressive as a planar NF with $K=M$ layers, and sometimes much more expressive.

\paragraph{Density matching}
We configure the CNF as described above, and train for 10,000 iterations using Adam~\citep{kingma2014adam}.
In contrast, the NF is trained for 500,000 iterations using RMSprop~\citep{hinton2012neural}, as suggested by \citet{rezende2015variational}.
For this task, we minimize $\KL{q(\vx)}{p(\vx)}$ as the loss function where $q$ is the flow model and the target density $p(\cdot)$ can be evaluated.
Figure~\ref{fig:cnf_toy} shows that CNF generally achieves lower loss.

\paragraph{Maximum Likelihood Training}
\begin{figure}
	\begin{subfigure}[b]{0.49\linewidth}
		\centering
		\begin{subfigure}[b]{0.03\linewidth}
			\rotatebox[origin=t]{90}{\scriptsize Density}\vspace{0.4\linewidth}
		\end{subfigure}%
		\begin{subfigure}[b]{0.136\linewidth}
			\caption*{5\%}
			\vspace*{-2.5mm}
			\includegraphics[width=\linewidth]{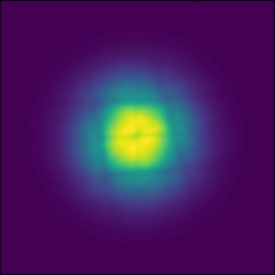}
		\end{subfigure}%
		\begin{subfigure}[b]{0.136\linewidth}
			\caption*{20\%}
			\vspace*{-2.5mm}
			\includegraphics[width=\linewidth]{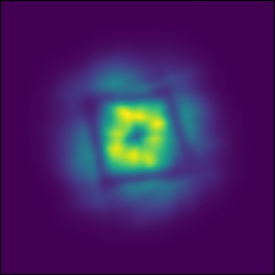}
		\end{subfigure}%
		\begin{subfigure}[b]{0.136\linewidth}
			\caption*{40\%}
			\vspace*{-2.5mm}
			\includegraphics[width=\linewidth]{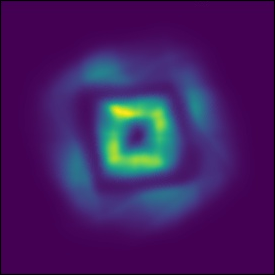}
		\end{subfigure}%
		\begin{subfigure}[b]{0.136\linewidth}
			\caption*{60\%}
			\vspace*{-2.5mm}
			\includegraphics[width=\linewidth]{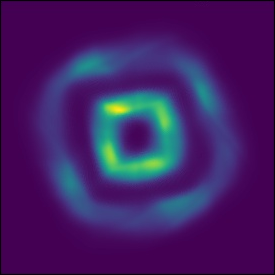}
		\end{subfigure}%
		\begin{subfigure}[b]{0.136\linewidth}
			\caption*{80\%}
			\vspace*{-2.5mm}
			\includegraphics[width=\linewidth]{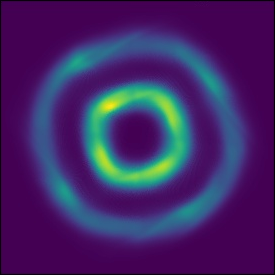}
		\end{subfigure}%
		\begin{subfigure}[b]{0.136\linewidth}
			\caption*{100\%}
			\vspace*{-2.5mm}
			\includegraphics[width=\linewidth]{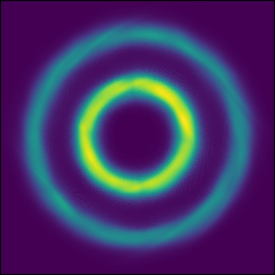}
		\end{subfigure}
		\begin{subfigure}[b]{0.136\linewidth}
			\hfill
		\end{subfigure}\\
		\begin{subfigure}[b]{0.03\linewidth}
			\rotatebox[origin=t]{90}{\scriptsize Samples}\vspace{0.2\linewidth}
		\end{subfigure}%
		\begin{subfigure}[b]{0.136\linewidth}
			\includegraphics[width=\linewidth]{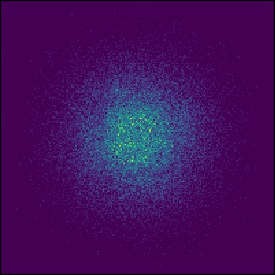}
		\end{subfigure}%
		\begin{subfigure}[b]{0.136\linewidth}
			\includegraphics[width=\linewidth]{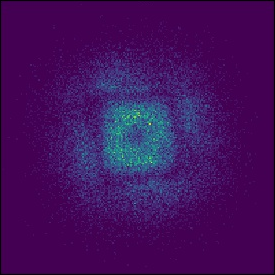}
		\end{subfigure}%
		\begin{subfigure}[b]{0.136\linewidth}
			\includegraphics[width=\linewidth]{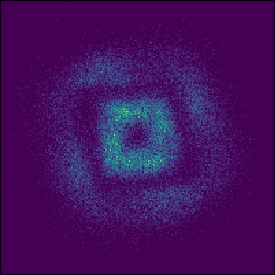}
		\end{subfigure}%
		\begin{subfigure}[b]{0.136\linewidth}
			\includegraphics[width=\linewidth]{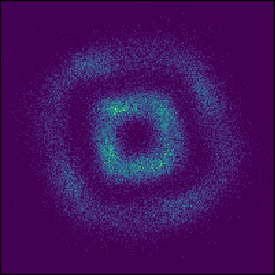}
		\end{subfigure}%
		\begin{subfigure}[b]{0.136\linewidth}
			\includegraphics[width=\linewidth]{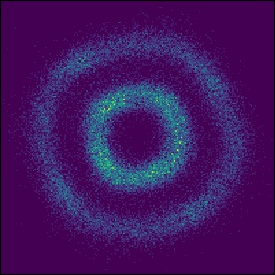}
		\end{subfigure}%
		\begin{subfigure}[b]{0.136\linewidth}
			\includegraphics[width=\linewidth]{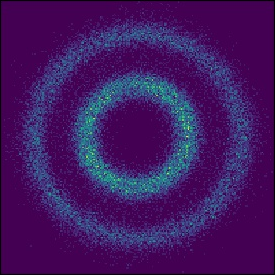}
		\end{subfigure}
		\begin{subfigure}[b]{0.136\linewidth}
			\includegraphics[width=\linewidth]{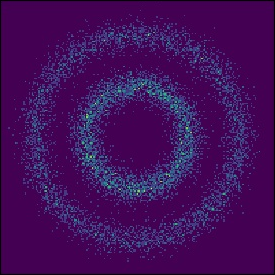}
		\end{subfigure}\\
		\begin{subfigure}[b]{0.03\linewidth}
			\rotatebox[origin=t]{90}{\footnotesize NF}\vspace{1.4\linewidth}
		\end{subfigure}%
		\begin{subfigure}[b]{0.136\linewidth}
			\includegraphics[width=\linewidth]{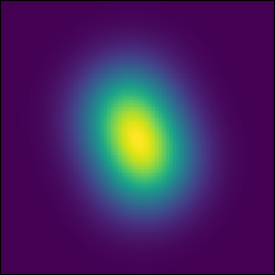}
		\end{subfigure}%
		\begin{subfigure}[b]{0.136\linewidth}
			\includegraphics[width=\linewidth]{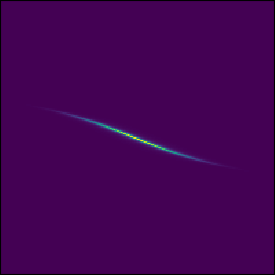}
		\end{subfigure}%
		\begin{subfigure}[b]{0.136\linewidth}
			\includegraphics[width=\linewidth]{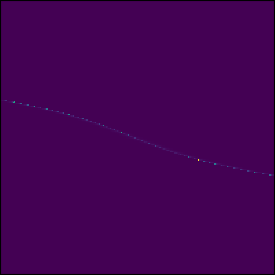}
		\end{subfigure}%
		\begin{subfigure}[b]{0.136\linewidth}
			\includegraphics[width=\linewidth]{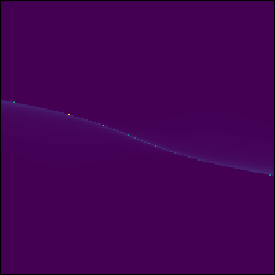}
		\end{subfigure}%
		\begin{subfigure}[b]{0.136\linewidth}
			\includegraphics[width=\linewidth]{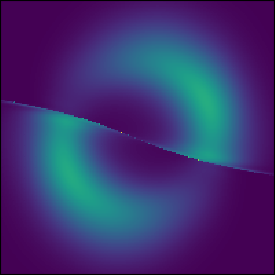}
		\end{subfigure}%
		\begin{subfigure}[b]{0.136\linewidth}
			\includegraphics[width=\linewidth]{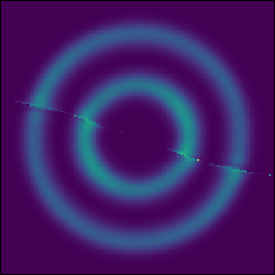}
		\end{subfigure}
		\begin{subfigure}[b]{0.136\linewidth}
			\hfill {\footnotesize Target\vspace{0.6\linewidth}} \hfill
		\end{subfigure}\\
		\caption{Two Circles}
		\label{fig:2circles}
	\end{subfigure}\hfill
	\begin{subfigure}[b]{0.49\linewidth}
		\centering
		\begin{subfigure}[b]{0.03\linewidth}
			\caption*{}
			\rotatebox[origin=t]{90}{\scriptsize Density}\vspace{0.4\linewidth}
		\end{subfigure}%
		\begin{subfigure}[b]{0.136\linewidth}
			\caption*{5\%}
			\vspace*{-2.5mm}
			\includegraphics[width=\linewidth]{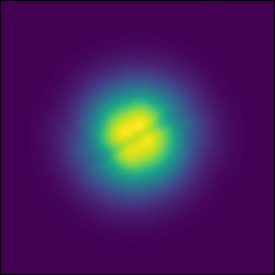}
		\end{subfigure}%
		\begin{subfigure}[b]{0.136\linewidth}
			\caption*{20\%}
			\vspace*{-2.5mm}
			\includegraphics[width=\linewidth]{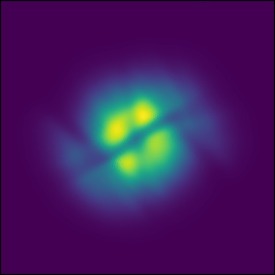}
		\end{subfigure}%
		\begin{subfigure}[b]{0.136\linewidth}
			\caption*{40\%}
			\vspace*{-2.5mm}
			\includegraphics[width=\linewidth]{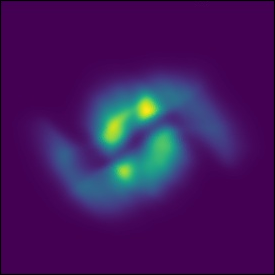}
		\end{subfigure}%
		\begin{subfigure}[b]{0.136\linewidth}
			\caption*{60\%}
			\vspace*{-2.5mm}
			\includegraphics[width=\linewidth]{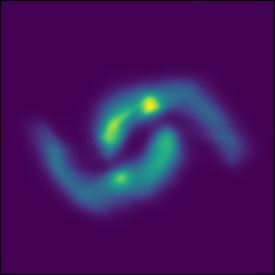}
		\end{subfigure}%
		\begin{subfigure}[b]{0.136\linewidth}
			\caption*{80\%}
			\vspace*{-2.5mm}
			\includegraphics[width=\linewidth]{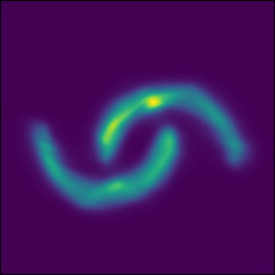}
		\end{subfigure}%
		\begin{subfigure}[b]{0.136\linewidth}
			\caption*{100\%}
			\vspace*{-2.5mm}
			\includegraphics[width=\linewidth]{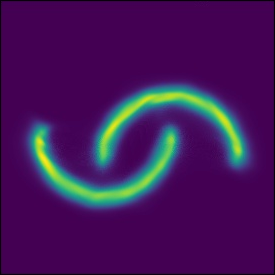}
		\end{subfigure}
		\begin{subfigure}[b]{0.136\linewidth}
			\hfill
		\end{subfigure}\\
		\begin{subfigure}[b]{0.03\linewidth}
			\rotatebox[origin=t]{90}{\scriptsize Samples}\vspace{0.2\linewidth}
		\end{subfigure}%
		\begin{subfigure}[b]{0.136\linewidth}
			\includegraphics[width=\linewidth]{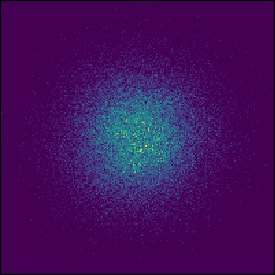}
		\end{subfigure}%
		\begin{subfigure}[b]{0.136\linewidth}
			\includegraphics[width=\linewidth]{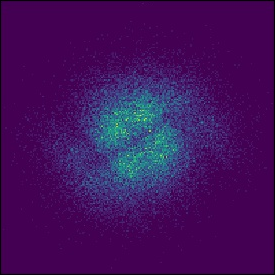}
		\end{subfigure}%
		\begin{subfigure}[b]{0.136\linewidth}
			\includegraphics[width=\linewidth]{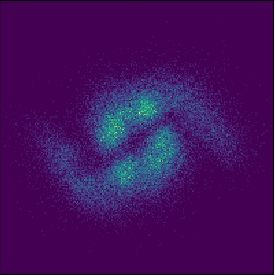}
		\end{subfigure}%
		\begin{subfigure}[b]{0.136\linewidth}
			\includegraphics[width=\linewidth]{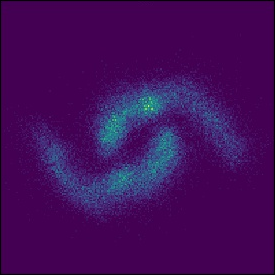}
		\end{subfigure}%
		\begin{subfigure}[b]{0.136\linewidth}
			\includegraphics[width=\linewidth]{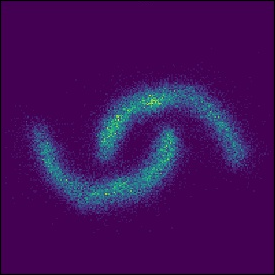}
		\end{subfigure}%
		\begin{subfigure}[b]{0.136\linewidth}
			\includegraphics[width=\linewidth]{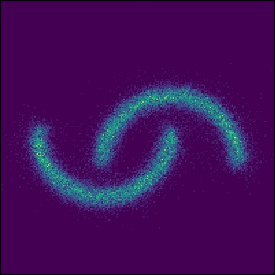}
		\end{subfigure}
		\begin{subfigure}[b]{0.136\linewidth}
			\includegraphics[width=\linewidth]{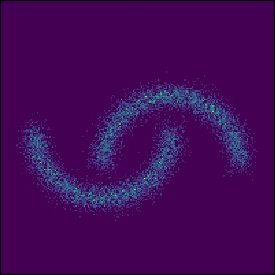}
		\end{subfigure}\\
		\begin{subfigure}[b]{0.03\linewidth}
			\rotatebox[origin=t]{90}{\footnotesize NF}\vspace{1.4\linewidth}
		\end{subfigure}%
		\begin{subfigure}[b]{0.136\linewidth}
			\includegraphics[width=\linewidth]{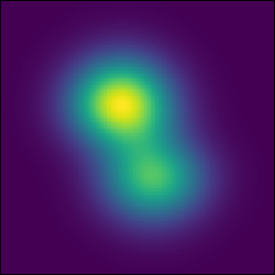}
		\end{subfigure}%
		\begin{subfigure}[b]{0.136\linewidth}
			\includegraphics[width=\linewidth]{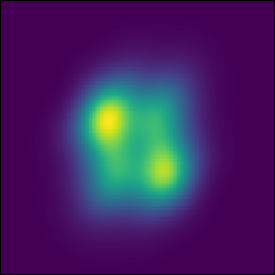}
		\end{subfigure}%
		\begin{subfigure}[b]{0.136\linewidth}
			\includegraphics[width=\linewidth]{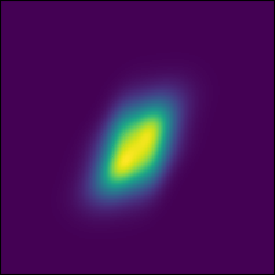}
		\end{subfigure}%
		\begin{subfigure}[b]{0.136\linewidth}
			\includegraphics[width=\linewidth]{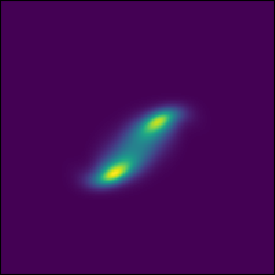}
		\end{subfigure}%
		\begin{subfigure}[b]{0.136\linewidth}
			\includegraphics[width=\linewidth]{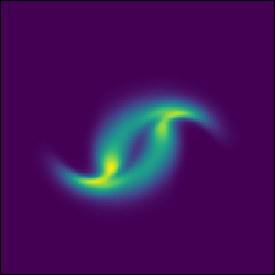}
		\end{subfigure}%
		\begin{subfigure}[b]{0.136\linewidth}
			\includegraphics[width=\linewidth]{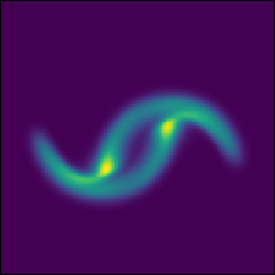}
		\end{subfigure}
		\begin{subfigure}[b]{0.136\linewidth}
			\hfill {\footnotesize Target\vspace{0.6\linewidth}} \hfill
		\end{subfigure}\\
		\caption{Two Moons}
		\label{fig:2moons}
	\end{subfigure}
\caption{\textbf{Visualizing the transformation from noise to data.}
Continuous-time normalizing flows are reversible, so we can train on a density estimation task and still be able to sample from the learned density efficiently. %
}
\label{fig:cnf_mle}
\end{figure}
A useful property of continuous-time normalizing flows is that we can compute the reverse transformation for about the same cost as the forward pass, which cannot be said for normalizing flows.
This lets us train the flow on a density estimation task by performing maximum likelihood estimation, which maximizes $\E_{p(\vx)}[\log q(\vx)]$ where $q(\cdot)$ is computed using the appropriate change of variables theorem, then afterwards reverse the CNF to generate random samples from $q(\vx)$.

For this task, we use 64 hidden units for CNF, and 64 stacked one-hidden-unit layers for NF.
Figure~\ref{fig:cnf_mle} shows the learned dynamics.
Instead of showing the initial Gaussian distribution, we display the transformed distribution after a small amount of time which shows the locations of the initial planar flows.
Interestingly, to fit the Two Circles distribution, the CNF rotates the planar flows so that the particles can be evenly spread into circles.
While the CNF transformations are smooth and interpretable, we find that NF transformations are very unintuitive and this model has difficulty fitting the two moons dataset in Figure~\ref{fig:2moons}.

\section{A generative latent function time-series model}
\label{sec:lode}

Applying neural networks to irregularly-sampled data such as medical records, network traffic, or neural spiking data is difficult.
Typically, observations are put into bins of fixed duration, and the latent dynamics are discretized in the same way.
This leads to difficulties with missing data and ill-defined latent variables. %
Missing data can be addressed using generative time-series models~\citep{alvarez2011computationally, futoma_hariharan_heller_2017, mei2017neural, soleimani2017scalable} or data imputation~ \citep{che_purushotham_cho_sontag_liu_2016}.
Another approach concatenates time-stamp information to the input of an RNN \citep{pmlr-v56-Choi16, pmlr-v56-Lipton16, du2016recurrent, li2017time}.

We present a continuous-time, generative approach to modeling time series.
Our model represents each time series by a latent trajectory.
Each trajectory is determined from a local initial state, $\sol_{t_0}$, and a global set of latent dynamics shared across all time series.
Given observation times $t_0, t_1, \dots, t_N$ and an initial state $\sol_{t_0}$, an ODE solver produces $\sol_{t_1},  \dots,\sol_{t_N}$, which describe the latent state at each observation.%
We define this generative model formally through a sampling procedure:
\begin{align}
\sol_{t_0} & \sim p(\sol_{t_0}) \\
\sol_{t_1}, \sol_{t_2}, \dots, \sol_{t_N} & = \solvefunc(\sol_{t_0}, f, \theta_f, t_0, \dots, t_N) \\
\textnormal{each} \quad \vx_{t_i} & \sim p(\obs | \sol_{t_i}, \theta_\obs)
\end{align}

\begin{figure}[b]
\centering
\includegraphics[width=\textwidth, trim={0.5cm 10cm 0 5.0cm}, clip]{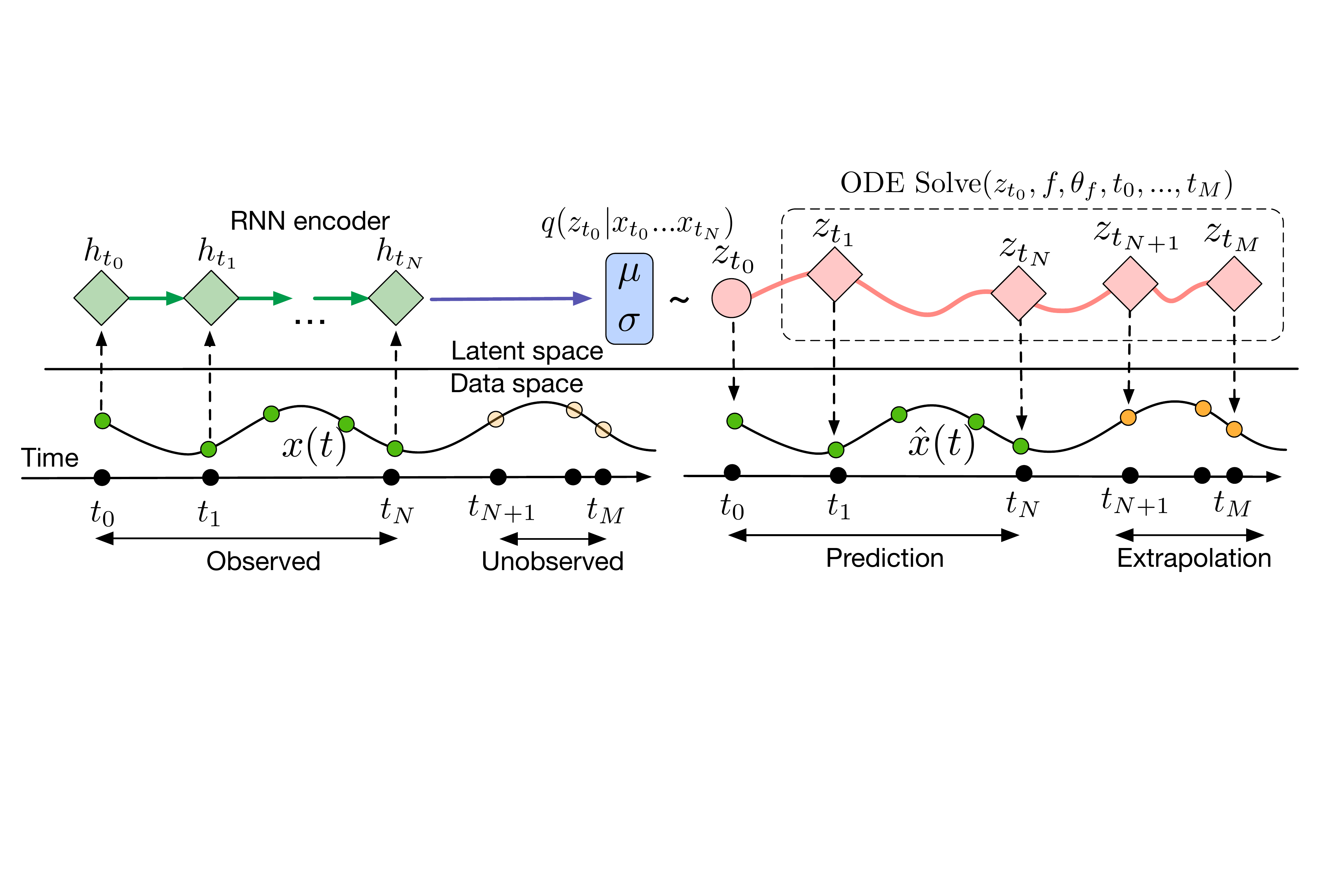}
\caption{Computation graph of the latent ODE model.}
\label{fig:time_series_workflow}
\end{figure}

Function $f$ is a time-invariant function that takes the value $\sol$ at the current time step and outputs the gradient: $\nicefrac{\partial \sol(t)}{\partial t} = f(\sol(t), \theta_f)$.
We parametrize this function using a neural net.
Because $f$ is time-invariant, given any latent state $\sol(t)$, the entire latent trajectory is uniquely defined.
Extrapolating this latent trajectory lets us make predictions arbitrarily far forwards or backwards in time.

\paragraph{Training and Prediction}
We can train this latent-variable model as a variational autoencoder~\citep{kingma2013autoencoding, rezende2014stochastic}, with sequence-valued observations.
Our recognition net is an RNN, which consumes the data sequentially backwards in time, and outputs $q_{\phi}(\latent_0|\vx_1, \vx_2, \dots, \vx_N)$.
A detailed algorithm can be found in Appendix~\ref{lode alg}.
Using ODEs as a generative model allows us to make predictions for arbitrary time points $t_1$...$t_M$ on a continuous timeline.

\begin{wrapfigure}[13]{r}{0.35\textwidth}
\centering
\vspace{-1em} \rotatebox{90}{\hspace{8mm}$\lambda(t)$}\includegraphics[width=.35\textwidth]{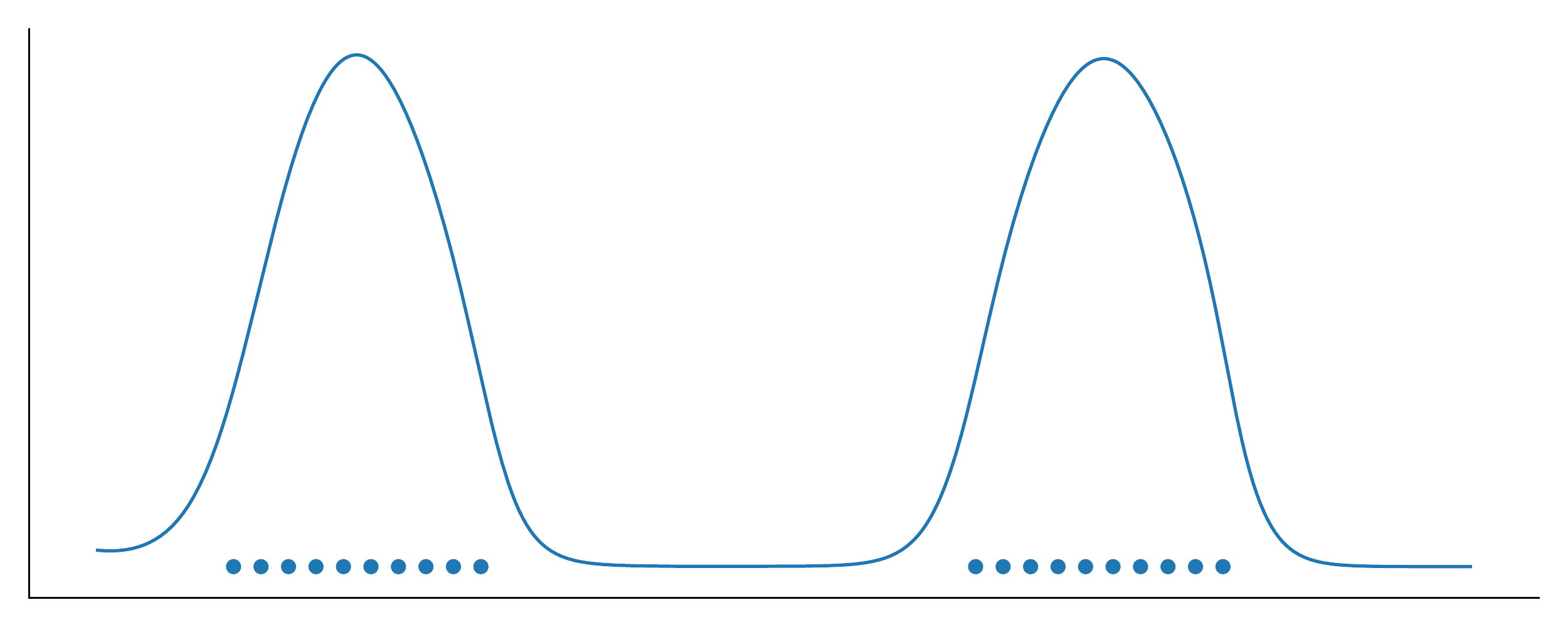}
	$t$
\caption{Fitting a latent ODE dynamics model with a Poisson process likelihood.
Dots show event times.
The line is the learned intensity $\lambda(t)$ of the Poisson process.}
\label{fig:poisson}
\end{wrapfigure}
\paragraph{Poisson Process likelihoods}
The fact that an observation occurred often tells us something about the latent state.
For example, a patient may be more likely to take a medical test if they are sick.
The rate of events can be parameterized by a function of the latent state: $ p(\textnormal{event at time $t$}| \, \sol(t)) = \lograte(\sol(t))$.
Given this rate function, the likelihood of a set of independent observation times in the interval $[t_\textnormal{start}, t_\textnormal{end}]$ is given by an inhomogeneous Poisson process~\citep{palm1943intensitatsschwankungen}:
\begin{align}
\log p(t_1 \dots t_N | \, t_\textnormal{start}, t_\textnormal{end})
= \sum_{i=1}^{N} \log \lograte(\latent(t_i)) - \int_{t_\textnormal{start}}^{ t_\textnormal{end}} \! \lograte(\latent(t)) dt \nonumber
\end{align}
\begin{wrapfigure}[28]{r}{0.5\textwidth}
	\centering
	\vspace{-1cm}
	\begin{subfigure}[b]{\linewidth}
		\centering
		\includegraphics[trim={0.8cm 0.8cm 0.8cm 1cm},clip,width=0.49\textwidth]{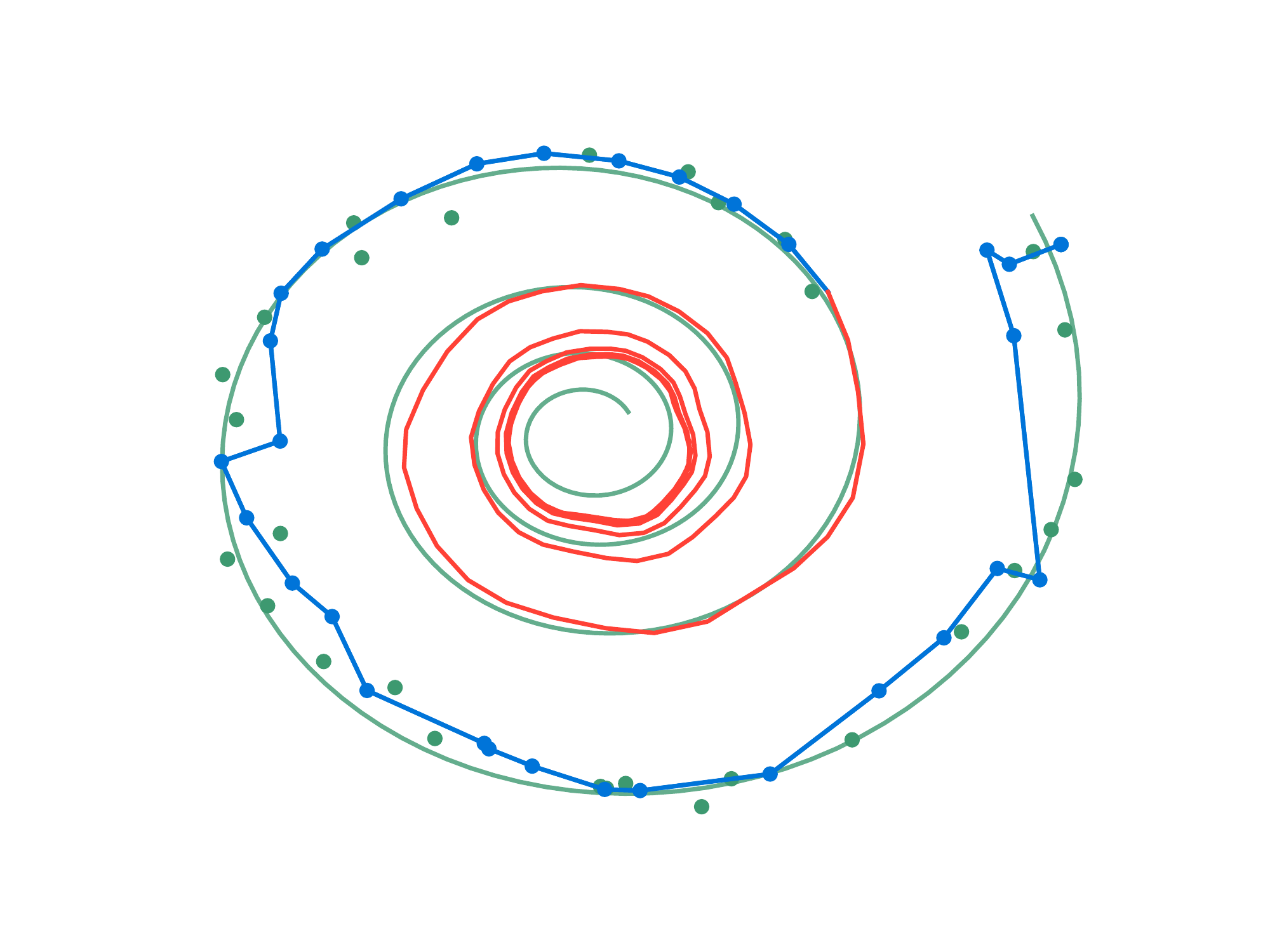}
		\includegraphics[trim={0.8cm 0.8cm 0.8cm 1cm},clip,width=0.49\textwidth]{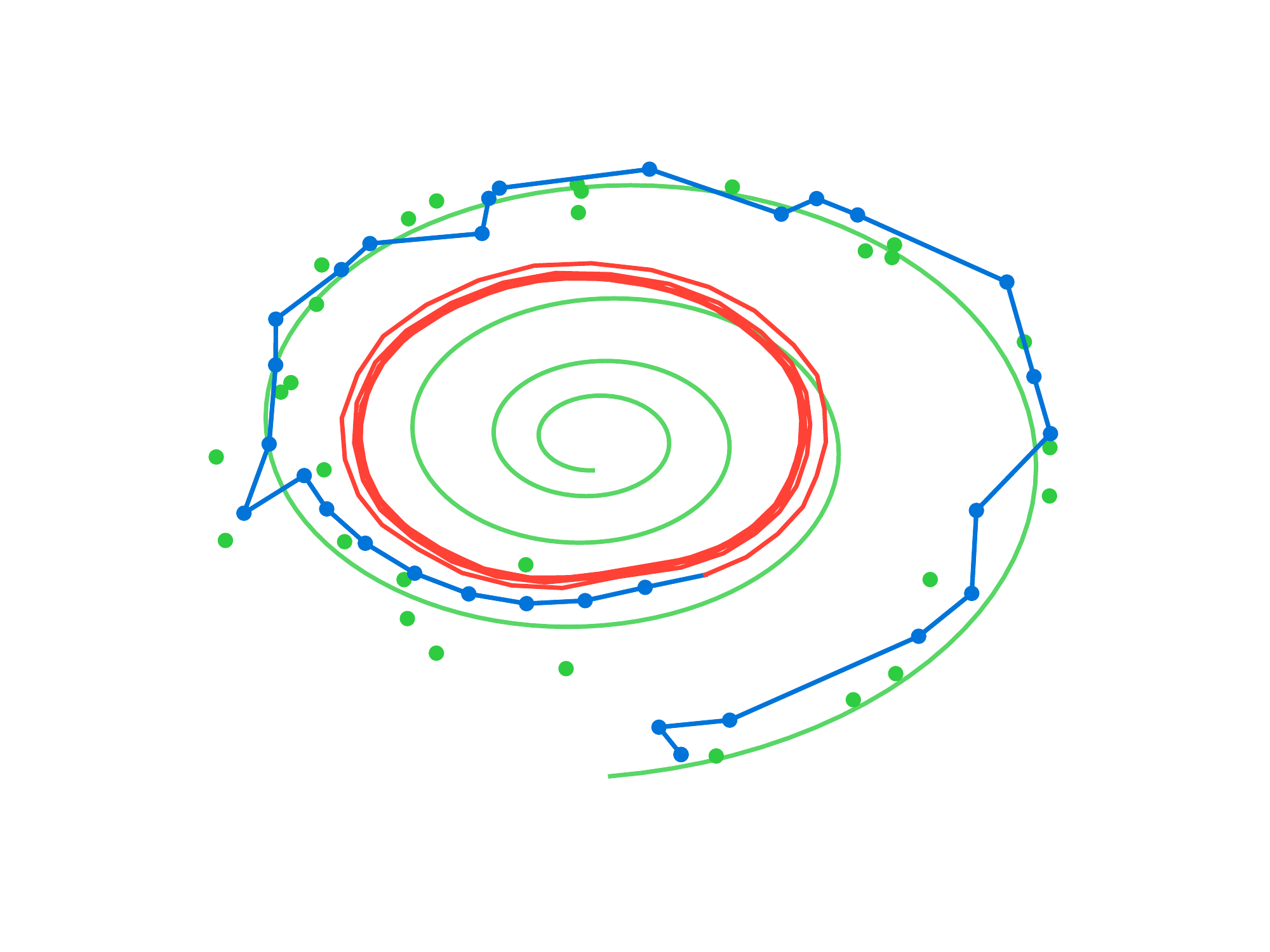}
		\caption{Recurrent Neural Network}
		\label{subfig:RNN}
	\end{subfigure}
	\centering
	\begin{subfigure}[b]{\linewidth}
		\centering
		\includegraphics[trim={0.8cm 0.8cm 0.8cm 1cm},clip,width=0.49\textwidth]{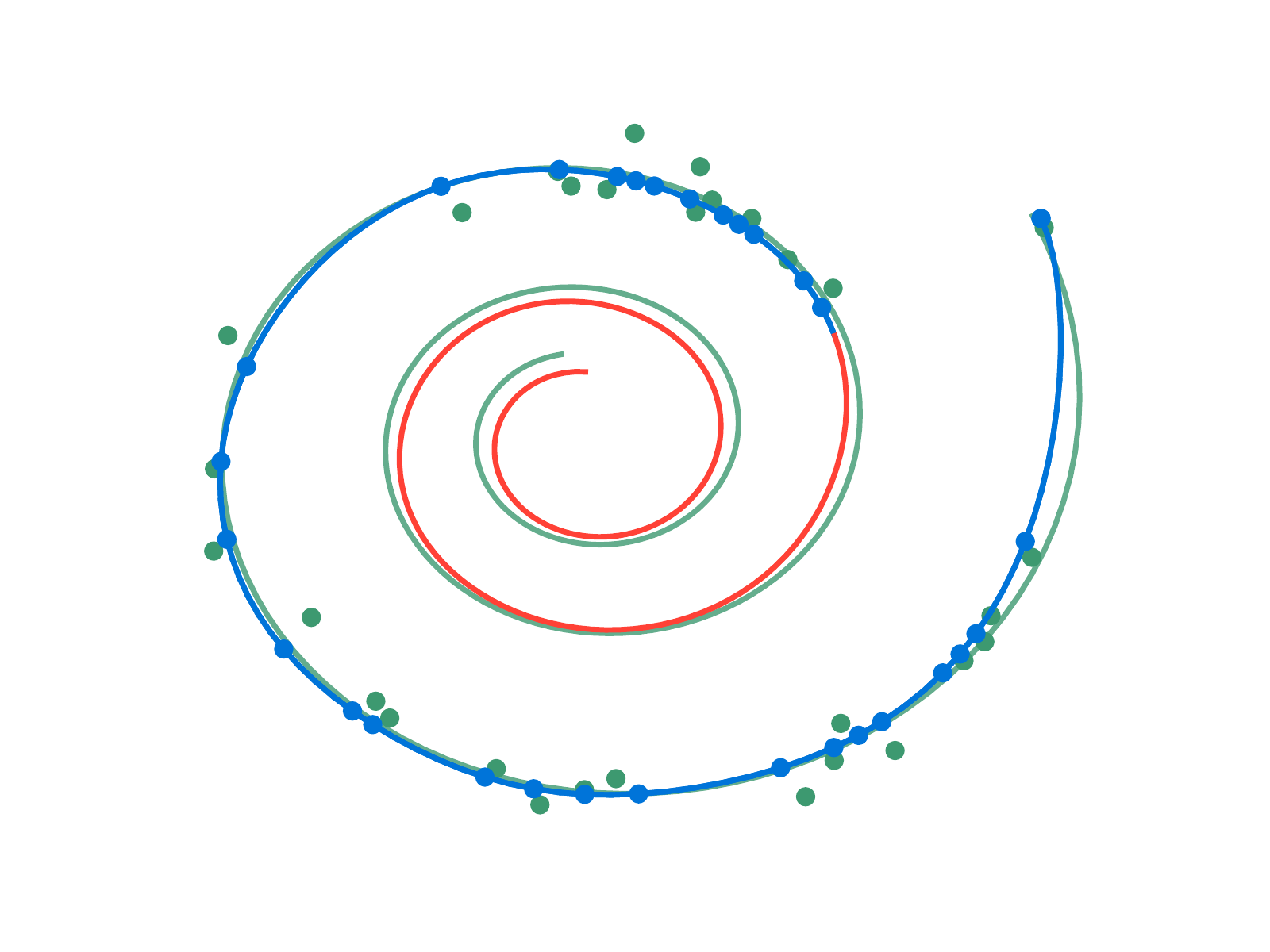}
		\includegraphics[trim={0.8cm 0.8cm 0.8cm 1cm},clip,width=0.49\textwidth]{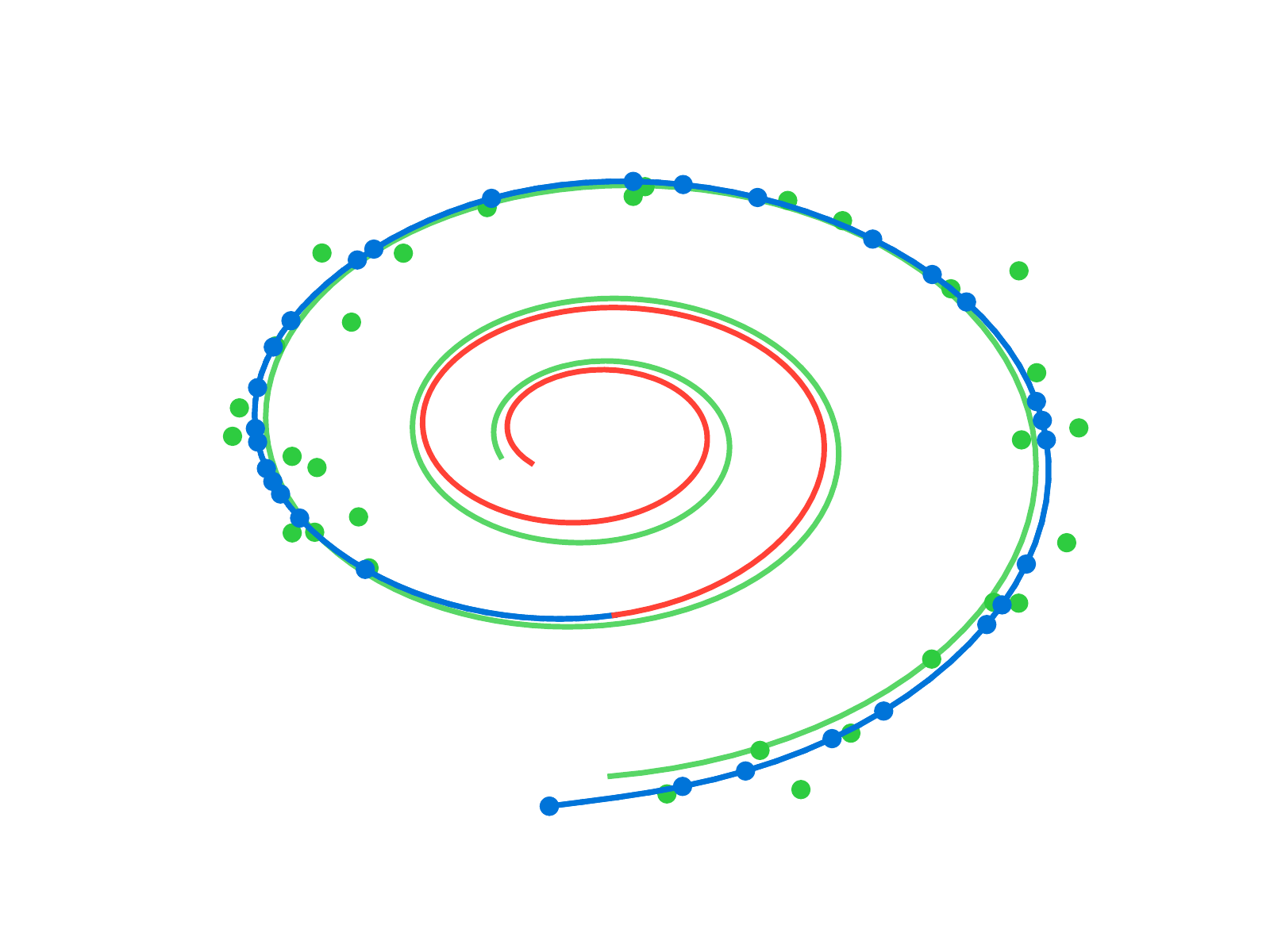}
		\caption{Latent Neural Ordinary Differential Equation}
		\label{subfig:method}
	\end{subfigure}
	\begin{subfigure}[b]{0.45\linewidth}
		\centering
		\vfill
		\raisebox{13mm}{\includegraphics[trim={1.15cm 1.7cm 1.15cm 1.7cm}, clip, width=0.7\textwidth]{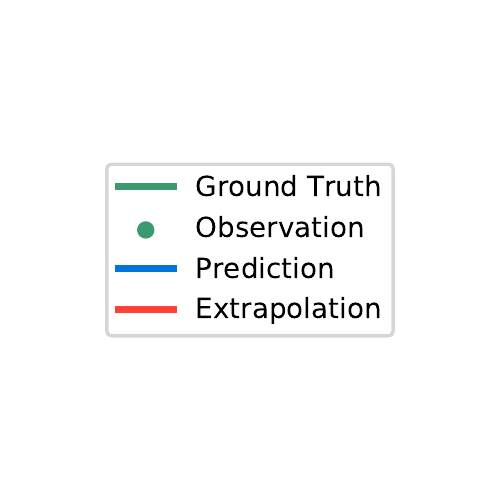}}
		\vfill
	\end{subfigure}
	\begin{subfigure}[b]{0.45\linewidth}
		\centering
		\raisebox{4mm}{\includegraphics[trim={0.7cm 1.4cm 0.7cm 0.7cm},clip,width=1.\textwidth]{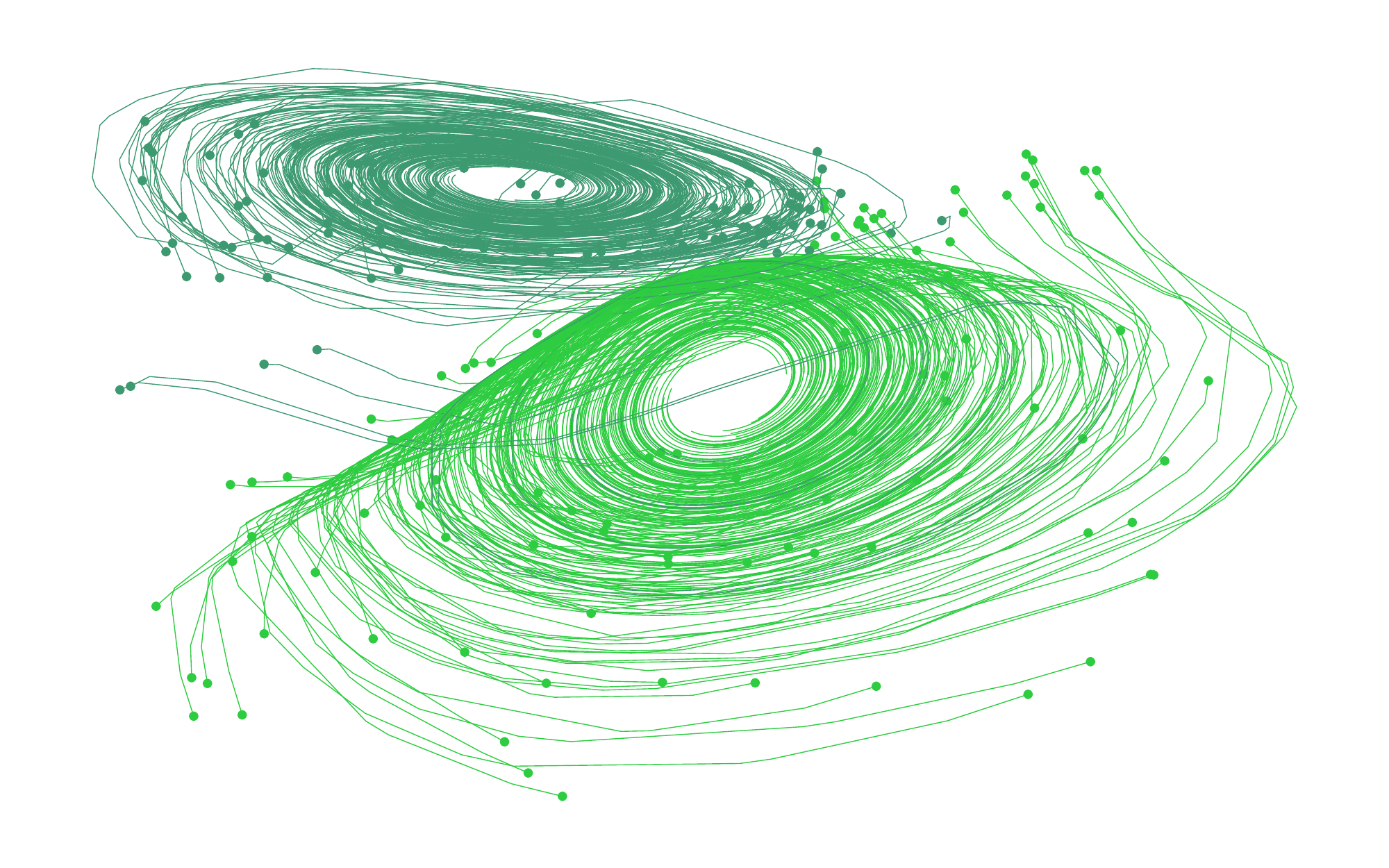}}
		\caption{Latent Trajectories}
		\label{subfig:Latent-traj}
	\end{subfigure}
	\caption{(\subref{subfig:RNN}):
		Reconstruction and extrapolation of spirals with irregular time points by a recurrent neural network.
		(\subref{subfig:method}):
		Reconstructions and extrapolations by a latent neural ODE.
		Blue curve shows model prediction. %
		Red shows extrapolation. %
		(\subref{subfig:Latent-traj})
		A projection of inferred 4-dimensional latent ODE trajectories onto their first two dimensions.
		Color indicates the direction of the corresponding trajectory.
		The model has learned latent dynamics which distinguishes the two directions.}
	\label{fig:ode_ctsm_examples}
\end{wrapfigure}
We can parameterize $\lograte(\cdot)$ using another neural network.
Conveniently, we can evaluate both the latent trajectory and the Poisson process likelihood together in a single call to an ODE solver.
Figure~\ref{fig:poisson} shows the event rate learned by such a model on a toy dataset.

A Poisson process likelihood on observation times
can be combined with a data likelihood 
to jointly model all observations and the times at which they were made.

\subsection{Time-series Latent ODE Experiments}
We investigate the ability of the latent ODE model to fit and extrapolate time series.
The recognition network is an RNN with 25 hidden units.
We use a 4-dimensional latent space.
We parameterize the dynamics function $f$ with a one-hidden-layer network with 20 hidden units.
The decoder computing $p(\vx_{t_i}|\latent_{t_i})$ is another neural network with one hidden layer with 20 hidden units.
Our baseline was a recurrent neural net with 25 hidden units trained to minimize negative Gaussian log-likelihood.
We trained a second version of this RNN whose inputs were concatenated with the time difference to the next observation to aid RNN with irregular observations.%

\paragraph{Bi-directional spiral dataset}
We generated a dataset of 1000 2-dimensional spirals, each starting at a different point, sampled at 100 equally-spaced timesteps.
The dataset contains two types of spirals: half are clockwise while the other half counter-clockwise.
To make the task more realistic, we add gaussian noise to the observations.

\paragraph{Time series with irregular time points}

To generate irregular timestamps, we randomly sample points from each trajectory without replacement ($n=\{30, 50, 100\}$).
We report predictive root-mean-squared error (RMSE) on 100 time points extending beyond those that were used for training.
Table~\ref{tab:ode_ctsm_results_predictive} shows that the latent ODE has substantially lower predictive RMSE.

\begin{wraptable}[6]{r}{0.5\linewidth}
	\vspace{-6mm}%
	\caption{Predictive RMSE on test set}
	\centering
	\begin{tabular}{lcccc}
		\# Observations & 30/100 & 50/100 & 100/100 \\
		\midrule
		\multicolumn{1}{l}{RNN} & 0.3937	& 0.3202	& 0.1813 \\
		\multicolumn{1}{l}{\method{}} & \textbf{0.1642} & \textbf{0.1502}	& \textbf{0.1346} \\
	\end{tabular}
	\label{tab:ode_ctsm_results_predictive}
\end{wraptable}
Figure~\ref{fig:ode_ctsm_examples} shows examples of spiral reconstructions with 30 sub-sampled points.
Reconstructions from the latent ODE were obtained by sampling from the posterior over latent trajectories and decoding it to data-space.
Examples with varying number of time points are shown in Appendix~\ref{seq:extra}.
We observed that reconstructions and extrapolations are consistent with the ground truth regardless of number of observed points and despite the noise.

\paragraph{Latent space interpolation}
Figure~\ref{subfig:Latent-traj} shows latent trajectories projected onto the first two dimensions of the latent space.
The trajectories form two separate clusters of trajectories, one decoding to clockwise spirals, the other to counter-clockwise.
Figure~\ref{fig:varying_components_z0} shows that the latent trajectories change smoothly as a function of the initial point $\sol(t_0)$, switching from a clockwise to a counter-clockwise spiral.

\begin{figure}
	\centering
	\includegraphics[width=\linewidth]{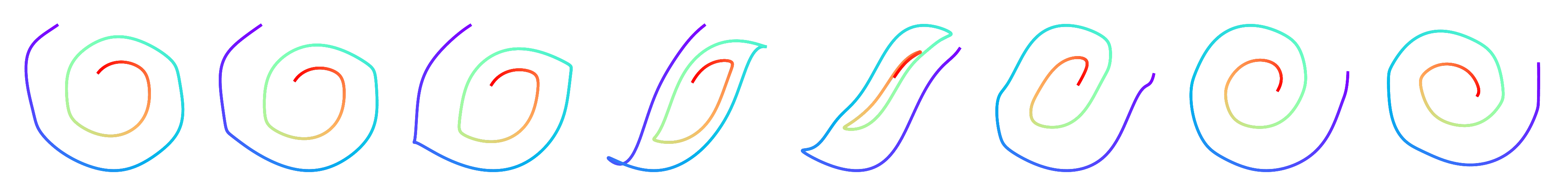}
	\caption{Data-space trajectories decoded from varying one dimension of $\sol_{t_0}$. Color indicates progression through time, starting at purple and ending at red.
		Note that the trajectories on the left are counter-clockwise, while the trajectories on the right are clockwise.}
	\label{fig:varying_components_z0}
\end{figure}

\section{Scope and Limitations}
\label{sec:scope}

\paragraph{Minibatching}
The use of mini-batches is less straightforward than for standard neural networks.
One can still batch together evaluations through the ODE solver by concatenating the states of each batch element together, creating a combined ODE with dimension $D \times K$.
In some cases, controlling error on all batch elements together might require evaluating the combined system $K$ times more often than if each system was solved individually.
However, in practice the number of evaluations did not increase substantially when using minibatches.

\paragraph{Uniqueness}
When do continuous dynamics have a unique solution?
Picard's existence theorem~\citep{coddington1955theory} states that the solution to an initial value problem exists and is unique if the differential equation is uniformly Lipschitz continuous in $\sol$ and continuous in $t$.
This theorem holds for our model if the neural network has finite weights and uses Lipshitz nonlinearities, such as \texttt{tanh} or \texttt{relu}.

\paragraph{Setting tolerances}
Our framework allows the user to trade off speed for precision, but requires the user to choose an error tolerance on both the forward and reverse passes during training.
For sequence modeling, the default value of \texttt{1.5e-8} was used. In the classification and density estimation experiments, we were able to reduce the tolerance to \texttt{1e-3} and \texttt{1e-5}, respectively, without degrading performance.

\paragraph{Reconstructing forward trajectories}
Reconstructing the state trajectory by running the dynamics backwards can introduce extra numerical error if the reconstructed trajectory diverges from the original.
This problem can be addressed by checkpointing: storing intermediate values of $\sol$ on the forward pass, and reconstructing the exact forward trajectory by re-integrating from those points.
We did not find this to be a practical problem, and we informally checked that reversing many layers of continuous normalizing flows with default tolerances recovered the initial states.

\section{Related Work}
The use of the adjoint method for training continuous-time neural networks was previously proposed~\citep{lecun1988theoretical, pearlmutter1995gradient}, though was not demonstrated practically.
The interpretation of residual networks~\cite{he2016deep} as approximate ODE solvers spurred research into exploiting reversibility and approximate computation in ResNets~\citep{chang2017reversible, lu2017beyond}.
We demonstrate these same properties in more generality by directly using an ODE solver.

\paragraph{Adaptive computation}
One can adapt computation time by training secondary neural networks to choose the number of evaluations of recurrent or residual networks~\citep{graves2016adaptive, jernite2016variable, figurnov2017spatially,chang2018multilevel}.
However, this introduces overhead both at training and test time, and extra parameters that need to be fit.
In contrast, ODE solvers offer well-studied, computationally cheap, and generalizable rules for adapting the amount of computation.

\paragraph{Constant memory backprop through reversibility}
Recent work developed reversible versions of residual networks~\citep{gomez2017reversible, haber2017stable, chang2017reversible}, which gives the same constant memory advantage as our approach.
However, these methods require restricted architectures, which partition the hidden units. %
Our approach does not have these restrictions.

\paragraph{Learning differential equations}
Much recent work has proposed learning differential equations from data.
One can train feed-forward or recurrent neural networks to approximate a differential equation~\citep{2018Raissi,raissi2018multistep,2017PDEnetfromData}, with applications such as fluid simulation~\citep{wiewel2018latent}.
There is also significant work on connecting Gaussian Processes (GPs) and ODE solvers~\citep{SchoDuvHen2014ode}.
GPs have been adapted to fit differential equations~\citep{raissi2018numerical} and can naturally model continuous-time effects and interventions~\citep{soleimani2017treatment,schulam2017if}.
\citet{2018blackboxSVIforSDE} use stochastic variational inference to recover the solution of a given stochastic differential equation.

\paragraph{Differentiating through ODE solvers}
The \texttt{dolfin} library~\citep{dolfin2013} implements adjoint computation for general ODE and PDE solutions, but only by backpropagating through the individual operations of the forward solver.
The Stan library~\citep{carpenter2015stan} implements gradient estimation through ODE solutions using forward sensitivity analysis.
However, forward sensitivity analysis is quadratic-time in the number of variables, whereas the adjoint sensitivity analysis is linear~\citep{carpenter2015stan,zhang2014fatode}.
\citet{melicher2017fast} used the adjoint method to train bespoke latent dynamic models.

In contrast, by providing a generic vector-Jacobian product, we allow an ODE solver to be trained end-to-end with any other differentiable model components. While use of vector-Jacobian products for solving the adjoint method has been explored in optimal control~\citep{andersson2013general,Andersson2018}, we highlight the potential of a general integration of black-box ODE solvers into automatic differentiation~\citep{baydin2017automatic} for deep learning and generative modeling.

\section{Conclusion}

We investigated the use of black-box ODE solvers as a model component, developing new models for time-series modeling, supervised learning, and density estimation.
These models are evaluated adaptively, and allow explicit control of the tradeoff between computation speed and accuracy.
Finally, we derived an instantaneous version of the change of variables formula, and developed continuous-time normalizing flows, which can scale to large layer sizes.

\section{Acknowledgements}
We thank Wenyi Wang and Geoff Roeder for help with proofs, and Daniel Duckworth, Ethan Fetaya, Hossein Soleimani, Eldad Haber, Ken Caluwaerts, Daniel Flam-Shepherd, and Harry Braviner for feedback. We thank Chris Rackauckas, Dougal Maclaurin, and Matthew James Johnson for helpful discussions.
We also thank Yuval Frommer for pointing out an unsupported claim about parameter efficiency.

\bibliography{bibfile}

\begin{thebibliography}{60}
\providecommand{\natexlab}[1]{#1}
\providecommand{\url}[1]{\texttt{#1}}
\expandafter\ifx\csname urlstyle\endcsname\relax
  \providecommand{\doi}[1]{doi: #1}\else
  \providecommand{\doi}{doi: \begingroup \urlstyle{rm}\Url}\fi

\bibitem[{\'A}lvarez and Lawrence(2011)]{alvarez2011computationally}
Mauricio~A {\'A}lvarez and Neil~D Lawrence.
\newblock Computationally efficient convolved multiple output {G}aussian
  processes.
\newblock \emph{Journal of Machine Learning Research}, 12\penalty0
  (May):\penalty0 1459--1500, 2011.

\bibitem[Amos and Kolter(2017)]{amos2017optnet}
Brandon Amos and J~Zico Kolter.
\newblock {OptNet}: Differentiable optimization as a layer in neural networks.
\newblock In \emph{International Conference on Machine Learning}, pages
  136--145, 2017.

\bibitem[Andersson(2013)]{andersson2013general}
Joel Andersson.
\newblock \emph{A general-purpose software framework for dynamic optimization}.
\newblock PhD thesis, 2013.

\bibitem[Andersson et~al.(In Press, 2018)Andersson, Gillis, Horn, Rawlings, and
  Diehl]{Andersson2018}
Joel A~E Andersson, Joris Gillis, Greg Horn, James~B Rawlings, and Moritz
  Diehl.
\newblock {CasADi} -- {A} software framework for nonlinear optimization and
  optimal control.
\newblock \emph{Mathematical Programming Computation}, In Press, 2018.

\bibitem[Baydin et~al.(2018)Baydin, Pearlmutter, Radul, and
  Siskind]{baydin2017automatic}
Atilim~Gunes Baydin, Barak~A Pearlmutter, Alexey~Andreyevich Radul, and
  Jeffrey~Mark Siskind.
\newblock Automatic differentiation in machine learning: a survey.
\newblock \emph{Journal of machine learning research}, 18\penalty0
  (153):\penalty0 1--153, 2018.

\bibitem[Berg et~al.(2018)Berg, Hasenclever, Tomczak, and
  Welling]{berg2018sylvester}
Rianne van~den Berg, Leonard Hasenclever, Jakub~M Tomczak, and Max Welling.
\newblock Sylvester normalizing flows for variational inference.
\newblock \emph{arXiv preprint arXiv:1803.05649}, 2018.

\bibitem[Carpenter et~al.(2015)Carpenter, Hoffman, Brubaker, Lee, Li, and
  Betancourt]{carpenter2015stan}
Bob Carpenter, Matthew~D Hoffman, Marcus Brubaker, Daniel Lee, Peter Li, and
  Michael Betancourt.
\newblock The {Stan} math library: Reverse-mode automatic differentiation in
  c++.
\newblock \emph{arXiv preprint arXiv:1509.07164}, 2015.

\bibitem[Chang et~al.(2017)Chang, Meng, Haber, Ruthotto, Begert, and
  Holtham]{chang2017reversible}
Bo~Chang, Lili Meng, Eldad Haber, Lars Ruthotto, David Begert, and Elliot
  Holtham.
\newblock Reversible architectures for arbitrarily deep residual neural
  networks.
\newblock \emph{arXiv preprint arXiv:1709.03698}, 2017.

\bibitem[Chang et~al.(2018)Chang, Meng, Haber, Tung, and
  Begert]{chang2018multilevel}
Bo~Chang, Lili Meng, Eldad Haber, Frederick Tung, and David Begert.
\newblock Multi-level residual networks from dynamical systems view.
\newblock In \emph{International Conference on Learning Representations}, 2018.
\newblock URL \url{https://openreview.net/forum?id=SyJS-OgR-}.

\bibitem[Che et~al.(2018)Che, Purushotham, Cho, Sontag, and
  Liu]{che_purushotham_cho_sontag_liu_2016}
Zhengping Che, Sanjay Purushotham, Kyunghyun Cho, David Sontag, and Yan Liu.
\newblock Recurrent neural networks for multivariate time series with missing
  values.
\newblock \emph{Scientific Reports}, 8\penalty0 (1):\penalty0 6085, 2018.
\newblock URL \url{https://doi.org/10.1038/s41598-018-24271-9}.

\bibitem[Choi et~al.(2016)Choi, Bahadori, Schuetz, Stewart, and
  Sun]{pmlr-v56-Choi16}
Edward Choi, Mohammad~Taha Bahadori, Andy Schuetz, Walter~F. Stewart, and
  Jimeng Sun.
\newblock Doctor {AI}: Predicting clinical events via recurrent neural
  networks.
\newblock In \emph{Proceedings of the 1st Machine Learning for Healthcare
  Conference}, volume~56 of \emph{Proceedings of Machine Learning Research},
  pages 301--318. PMLR, 18--19 Aug 2016.
\newblock URL \url{http://proceedings.mlr.press/v56/Choi16.html}.

\bibitem[Coddington and Levinson(1955)]{coddington1955theory}
Earl~A Coddington and Norman Levinson.
\newblock \emph{Theory of ordinary differential equations}.
\newblock Tata McGraw-Hill Education, 1955.

\bibitem[Dinh et~al.(2014)Dinh, Krueger, and Bengio]{dinh2014nice}
Laurent Dinh, David Krueger, and Yoshua Bengio.
\newblock {NICE}: Non-linear independent components estimation.
\newblock \emph{arXiv preprint arXiv:1410.8516}, 2014.

\bibitem[Du et~al.(2016)Du, Dai, Trivedi, Upadhyay, Gomez-Rodriguez, and
  Song]{du2016recurrent}
Nan Du, Hanjun Dai, Rakshit Trivedi, Utkarsh Upadhyay, Manuel Gomez-Rodriguez,
  and Le~Song.
\newblock Recurrent marked temporal point processes: Embedding event history to
  vector.
\newblock In \emph{International Conference on Knowledge Discovery and Data
  Mining}, pages 1555--1564. ACM, 2016.

\bibitem[Farrell et~al.(2013)Farrell, Ham, Funke, and Rognes]{dolfin2013}
Patrick Farrell, David Ham, Simon Funke, and Marie Rognes.
\newblock Automated derivation of the adjoint of high-level transient finite
  element programs.
\newblock \emph{SIAM Journal on Scientific Computing}, 2013.

\bibitem[Figurnov et~al.(2017)Figurnov, Collins, Zhu, Zhang, Huang, Vetrov, and
  Salakhutdinov]{figurnov2017spatially}
Michael Figurnov, Maxwell~D Collins, Yukun Zhu, Li~Zhang, Jonathan Huang,
  Dmitry Vetrov, and Ruslan Salakhutdinov.
\newblock Spatially adaptive computation time for residual networks.
\newblock \emph{arXiv preprint}, 2017.

\bibitem[{Futoma} et~al.(2017){Futoma}, {Hariharan}, and
  {Heller}]{futoma_hariharan_heller_2017}
J.~{Futoma}, S.~{Hariharan}, and K.~{Heller}.
\newblock {Learning to Detect Sepsis with a Multitask {G}aussian Process RNN
  Classifier}.
\newblock \emph{ArXiv e-prints}, 2017.

\bibitem[Gomez et~al.(2017)Gomez, Ren, Urtasun, and
  Grosse]{gomez2017reversible}
Aidan~N Gomez, Mengye Ren, Raquel Urtasun, and Roger~B Grosse.
\newblock The reversible residual network: Backpropagation without storing
  activations.
\newblock In \emph{Advances in Neural Information Processing Systems}, pages
  2211--2221, 2017.

\bibitem[Graves(2016)]{graves2016adaptive}
Alex Graves.
\newblock Adaptive computation time for recurrent neural networks.
\newblock \emph{arXiv preprint arXiv:1603.08983}, 2016.

\bibitem[Ha et~al.(2016)Ha, Dai, and Le]{ha2016hypernetworks}
David Ha, Andrew Dai, and Quoc~V Le.
\newblock Hypernetworks.
\newblock \emph{arXiv preprint arXiv:1609.09106}, 2016.

\bibitem[Haber and Ruthotto(2017)]{haber2017stable}
Eldad Haber and Lars Ruthotto.
\newblock Stable architectures for deep neural networks.
\newblock \emph{Inverse Problems}, 34\penalty0 (1):\penalty0 014004, 2017.

\bibitem[Hairer et~al.(1987)Hairer, N{\o}rsett, and
  Wanner]{hairer87:_solvin_ordin_differ_equat_i}
E.~Hairer, S.P. N{\o}rsett, and G.~Wanner.
\newblock \emph{{Solving Ordinary Differential Equations {I} -- Nonstiff
  Problems}}.
\newblock Springer, 1987.

\bibitem[He et~al.(2016{\natexlab{a}})He, Zhang, Ren, and Sun]{he2016deep}
Kaiming He, Xiangyu Zhang, Shaoqing Ren, and Jian Sun.
\newblock Deep residual learning for image recognition.
\newblock In \emph{Proceedings of the IEEE conference on computer vision and
  pattern recognition}, pages 770--778, 2016{\natexlab{a}}.

\bibitem[He et~al.(2016{\natexlab{b}})He, Zhang, Ren, and Sun]{he2016identity}
Kaiming He, Xiangyu Zhang, Shaoqing Ren, and Jian Sun.
\newblock Identity mappings in deep residual networks.
\newblock In \emph{European conference on computer vision}, pages 630--645.
  Springer, 2016{\natexlab{b}}.

\bibitem[Hinton et~al.(2012)Hinton, Srivastava, and Swersky]{hinton2012neural}
Geoffrey Hinton, Nitish Srivastava, and Kevin Swersky.
\newblock Neural networks for machine learning lecture 6a overview of
  mini-batch gradient descent, 2012.

\bibitem[Jernite et~al.(2016)Jernite, Grave, Joulin, and
  Mikolov]{jernite2016variable}
Yacine Jernite, Edouard Grave, Armand Joulin, and Tomas Mikolov.
\newblock Variable computation in recurrent neural networks.
\newblock \emph{arXiv preprint arXiv:1611.06188}, 2016.

\bibitem[Kingma and Ba(2014)]{kingma2014adam}
Diederik~P Kingma and Jimmy Ba.
\newblock {Adam}: A method for stochastic optimization.
\newblock \emph{arXiv preprint arXiv:1412.6980}, 2014.

\bibitem[Kingma and Welling(2014)]{kingma2013autoencoding}
Diederik~P. Kingma and Max Welling.
\newblock Auto-encoding variational {B}ayes.
\newblock \emph{International Conference on Learning Representations}, 2014.

\bibitem[Kingma et~al.(2016)Kingma, Salimans, Jozefowicz, Chen, Sutskever, and
  Welling]{kingma2016improved}
Diederik~P Kingma, Tim Salimans, Rafal Jozefowicz, Xi~Chen, Ilya Sutskever, and
  Max Welling.
\newblock Improved variational inference with inverse autoregressive flow.
\newblock In \emph{Advances in Neural Information Processing Systems}, pages
  4743--4751, 2016.

\bibitem[Kutta(1901)]{Kutta}
W.~Kutta.
\newblock {Beitrag zur n{\"a}herungsweisen {I}ntegration totaler
  {D}ifferentialgleichungen}.
\newblock \emph{Zeitschrift f{{\"u}r} Mathematik und Physik}, 46:\penalty0
  435--453, 1901.

\bibitem[LeCun et~al.(1988)LeCun, Touresky, Hinton, and
  Sejnowski]{lecun1988theoretical}
Yann LeCun, D~Touresky, G~Hinton, and T~Sejnowski.
\newblock A theoretical framework for back-propagation.
\newblock In \emph{Proceedings of the 1988 connectionist models summer school},
  volume~1, pages 21--28. CMU, Pittsburgh, Pa: Morgan Kaufmann, 1988.

\bibitem[LeCun et~al.(1998)LeCun, Bottou, Bengio, and
  Haffner]{lecun1998gradient}
Yann LeCun, L{\'e}on Bottou, Yoshua Bengio, and Patrick Haffner.
\newblock Gradient-based learning applied to document recognition.
\newblock \emph{Proceedings of the IEEE}, 86\penalty0 (11):\penalty0
  2278--2324, 1998.

\bibitem[Li(2017)]{li2017time}
Yang Li.
\newblock Time-dependent representation for neural event sequence prediction.
\newblock \emph{arXiv preprint arXiv:1708.00065}, 2017.

\bibitem[Lipton et~al.(2016)Lipton, Kale, and Wetzel]{pmlr-v56-Lipton16}
Zachary~C Lipton, David Kale, and Randall Wetzel.
\newblock Directly modeling missing data in sequences with {RNN}s: Improved
  classification of clinical time series.
\newblock In \emph{Proceedings of the 1st Machine Learning for Healthcare
  Conference}, volume~56 of \emph{Proceedings of Machine Learning Research},
  pages 253--270. PMLR, 18--19 Aug 2016.
\newblock URL \url{http://proceedings.mlr.press/v56/Lipton16.html}.

\bibitem[{Long} et~al.(2017){Long}, {Lu}, {Ma}, and {Dong}]{2017PDEnetfromData}
Z.~{Long}, Y.~{Lu}, X.~{Ma}, and B.~{Dong}.
\newblock {{PDE-Net}: Learning {PDE}s from Data}.
\newblock \emph{ArXiv e-prints}, 2017.

\bibitem[Lu et~al.(2017)Lu, Zhong, Li, and Dong]{lu2017beyond}
Yiping Lu, Aoxiao Zhong, Quanzheng Li, and Bin Dong.
\newblock Beyond finite layer neural networks: Bridging deep architectures and
  numerical differential equations.
\newblock \emph{arXiv preprint arXiv:1710.10121}, 2017.

\bibitem[Maclaurin et~al.(2015)Maclaurin, Duvenaud, and
  Adams]{maclaurin2015autograd}
Dougal Maclaurin, David Duvenaud, and Ryan~P Adams.
\newblock {Autograd}: Reverse-mode differentiation of native {P}ython.
\newblock In \emph{ICML workshop on Automatic Machine Learning}, 2015.

\bibitem[Mei and Eisner(2017)]{mei2017neural}
Hongyuan Mei and Jason~M Eisner.
\newblock The neural {H}awkes process: A neurally self-modulating multivariate
  point process.
\newblock In \emph{Advances in Neural Information Processing Systems}, pages
  6757--6767, 2017.

\bibitem[Melicher et~al.(2017)Melicher, Haber, and Vanroose]{melicher2017fast}
Valdemar Melicher, Tom Haber, and Wim Vanroose.
\newblock Fast derivatives of likelihood functionals for {ODE} based models
  using adjoint-state method.
\newblock \emph{Computational Statistics}, 32\penalty0 (4):\penalty0
  1621--1643, 2017.

\bibitem[Palm(1943)]{palm1943intensitatsschwankungen}
Conny Palm.
\newblock Intensit{\"a}tsschwankungen im fernsprechverker.
\newblock \emph{Ericsson Technics}, 1943.

\bibitem[Paszke et~al.(2017)Paszke, Gross, Chintala, Chanan, Yang, DeVito, Lin,
  Desmaison, Antiga, and Lerer]{paszke2017automatic}
Adam Paszke, Sam Gross, Soumith Chintala, Gregory Chanan, Edward Yang, Zachary
  DeVito, Zeming Lin, Alban Desmaison, Luca Antiga, and Adam Lerer.
\newblock Automatic differentiation in pytorch.
\newblock 2017.

\bibitem[Pearlmutter(1995)]{pearlmutter1995gradient}
Barak~A Pearlmutter.
\newblock Gradient calculations for dynamic recurrent neural networks: A
  survey.
\newblock \emph{IEEE Transactions on Neural networks}, 6\penalty0 (5):\penalty0
  1212--1228, 1995.

\bibitem[Pontryagin et~al.(1962)Pontryagin, Mishchenko, Boltyanskii, and
  Gamkrelidze]{pontryagin1962mathematical}
Lev~Semenovich Pontryagin, EF~Mishchenko, VG~Boltyanskii, and RV~Gamkrelidze.
\newblock The mathematical theory of optimal processes.
\newblock 1962.

\bibitem[{Raissi} and {Karniadakis}(2018)]{2018Raissi}
M.~{Raissi} and G.~E. {Karniadakis}.
\newblock {Hidden physics models: Machine learning of nonlinear partial
  differential equations}.
\newblock \emph{Journal of Computational Physics}, pages 125--141, 2018.

\bibitem[Raissi et~al.(2018{\natexlab{a}})Raissi, Perdikaris, and
  Karniadakis]{raissi2018multistep}
Maziar Raissi, Paris Perdikaris, and George~Em Karniadakis.
\newblock Multistep neural networks for data-driven discovery of nonlinear
  dynamical systems.
\newblock \emph{arXiv preprint arXiv:1801.01236}, 2018{\natexlab{a}}.

\bibitem[Raissi et~al.(2018{\natexlab{b}})Raissi, Perdikaris, and
  Karniadakis]{raissi2018numerical}
Maziar Raissi, Paris Perdikaris, and George~Em Karniadakis.
\newblock Numerical {G}aussian processes for time-dependent and nonlinear
  partial differential equations.
\newblock \emph{SIAM Journal on Scientific Computing}, 40\penalty0
  (1):\penalty0 A172--A198, 2018{\natexlab{b}}.

\bibitem[Rezende et~al.(2014)Rezende, Mohamed, and
  Wierstra]{rezende2014stochastic}
Danilo~J Rezende, Shakir Mohamed, and Daan Wierstra.
\newblock Stochastic backpropagation and approximate inference in deep
  generative models.
\newblock In \emph{Proceedings of the 31st International Conference on Machine
  Learning}, pages 1278--1286, 2014.

\bibitem[Rezende and Mohamed(2015)]{rezende2015variational}
Danilo~Jimenez Rezende and Shakir Mohamed.
\newblock Variational inference with normalizing flows.
\newblock \emph{arXiv preprint arXiv:1505.05770}, 2015.

\bibitem[Runge(1895)]{Runge}
C.~Runge.
\newblock {{\"U}ber die numerische {A}ufl{\"o}sung von
  {D}ifferentialgleichungen}.
\newblock \emph{Mathematische Annalen}, 46:\penalty0 167--178, 1895.

\bibitem[Ruthotto and Haber(2018)]{ruthotto2018deep}
Lars Ruthotto and Eldad Haber.
\newblock Deep neural networks motivated by partial differential equations.
\newblock \emph{arXiv preprint arXiv:1804.04272}, 2018.

\bibitem[Ryder et~al.(2018)Ryder, Golightly, McGough, and
  Prangle]{2018blackboxSVIforSDE}
T.~Ryder, A.~Golightly, A.~S. McGough, and D.~Prangle.
\newblock {Black-box Variational Inference for Stochastic Differential
  Equations}.
\newblock \emph{ArXiv e-prints}, 2018.

\bibitem[Schober et~al.(2014)Schober, Duvenaud, and Hennig]{SchoDuvHen2014ode}
Michael Schober, David Duvenaud, and Philipp Hennig.
\newblock Probabilistic {ODE} solvers with {R}unge-{K}utta means.
\newblock In \emph{Advances in Neural Information Processing Systems 25}, 2014.

\bibitem[Schulam and Saria(2017)]{schulam2017if}
Peter Schulam and Suchi Saria.
\newblock What-if reasoning with counterfactual {G}aussian processes.
\newblock \emph{arXiv preprint arXiv:1703.10651}, 2017.

\bibitem[Soleimani et~al.(2017{\natexlab{a}})Soleimani, Hensman, and
  Saria]{soleimani2017scalable}
Hossein Soleimani, James Hensman, and Suchi Saria.
\newblock Scalable joint models for reliable uncertainty-aware event
  prediction.
\newblock \emph{IEEE transactions on pattern analysis and machine
  intelligence}, 2017{\natexlab{a}}.

\bibitem[Soleimani et~al.(2017{\natexlab{b}})Soleimani, Subbaswamy, and
  Saria]{soleimani2017treatment}
Hossein Soleimani, Adarsh Subbaswamy, and Suchi Saria.
\newblock Treatment-response models for counterfactual reasoning with
  continuous-time, continuous-valued interventions.
\newblock \emph{arXiv preprint arXiv:1704.02038}, 2017{\natexlab{b}}.

\bibitem[Stam(1999)]{stam1999stable}
Jos Stam.
\newblock Stable fluids.
\newblock In \emph{Proceedings of the 26th annual conference on Computer
  graphics and interactive techniques}, pages 121--128. ACM
  Press/Addison-Wesley Publishing Co., 1999.

\bibitem[Stapor et~al.(2018)Stapor, Froehlich, and
  Hasenauer]{stapor2018optimization}
Paul Stapor, Fabian Froehlich, and Jan Hasenauer.
\newblock Optimization and uncertainty analysis of {ODE} models using second
  order adjoint sensitivity analysis.
\newblock \emph{bioRxiv}, page 272005, 2018.

\bibitem[Tomczak and Welling(2016)]{tomczak2016improving}
Jakub~M Tomczak and Max Welling.
\newblock Improving variational auto-encoders using {H}ouseholder flow.
\newblock \emph{arXiv preprint arXiv:1611.09630}, 2016.

\bibitem[Wiewel et~al.(2018)Wiewel, Becher, and Thuerey]{wiewel2018latent}
Steffen Wiewel, Moritz Becher, and Nils Thuerey.
\newblock Latent-space physics: Towards learning the temporal evolution of
  fluid flow.
\newblock \emph{arXiv preprint arXiv:1802.10123}, 2018.

\bibitem[Zhang and Sandu(2014)]{zhang2014fatode}
Hong Zhang and Adrian Sandu.
\newblock Fatode: a library for forward, adjoint, and tangent linear
  integration of {ODE}s.
\newblock \emph{SIAM Journal on Scientific Computing}, 36\penalty0
  (5):\penalty0 C504--C523, 2014.

\end{thebibliography}
\small {\bibliographystyle{plainnat}}

\clearpage

\begin{appendices}

\section{Proof of the Instantaneous Change of Variables Theorem}\label{sec:proof}
\begin{theorem*}[Instantaneous Change of Variables]
Let $\cnfx(t)$ be a finite continuous random variable with probability $p(\cnfx(t))$ dependent on time.
Let $\frac{d\cnfx}{dt} = f(\cnfx(t), t)$ be a differential equation describing a continuous-in-time transformation of $\cnfx(t)$.
Assuming that $f$ is uniformly Lipschitz continuous in $\cnfx$ and continuous in $t$, then the change in log probability also follows a differential equation:
\begin{align*}
\frac{\partial \log p(\cnfx(t))}{\partial t} = -\tr\left(\frac{d f}{d \cnfx}(t)\right)
\end{align*}
\end{theorem*}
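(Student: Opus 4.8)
The plan is to reduce the infinitesimal statement to the ordinary (finite) change of variables formula \eqref{eq:change of variables} applied over a vanishing time step, and then extract the leading-order term. Start from the limit definition of the time derivative,
\[
\frac{\partial \log p(\cnfx(t))}{\partial t} = \lim_{\epsilon \to 0^+} \frac{\log p(\cnfx(t+\epsilon)) - \log p(\cnfx(t))}{\epsilon},
\]
and let $T_\epsilon$ denote the map sending $\cnfx(t)$ to $\cnfx(t+\epsilon)$ along the flow of $f$. Since $f$ is uniformly Lipschitz in $\cnfx$ and continuous in $t$, Picard--Lindel\"of guarantees that the flow exists, is unique, and depends differentiably on its initial condition; hence $T_\epsilon$ is a well-defined diffeomorphism for $\epsilon$ small, with $T_0 = \mathrm{id}$, and \eqref{eq:change of variables} applies: $\log p(\cnfx(t+\epsilon)) = \log p(\cnfx(t)) - \log\bigl|\det \tfrac{\partial T_\epsilon}{\partial \cnfx(t)}\bigr|$.

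Substituting, the theorem reduces to showing
\[
\lim_{\epsilon \to 0^+} \frac{1}{\epsilon}\, \log \Bigl| \det \frac{\partial T_\epsilon}{\partial \cnfx(t)} \Bigr| = \tr\!\left(\frac{\partial f}{\partial \cnfx(t)}\right),
\]
after which the stated sign follows. Because $\cnfx(t+\epsilon) = \cnfx(t) + \int_t^{t+\epsilon} f(\cnfx(s),s)\,ds$, differentiating in the initial condition gives $J_\epsilon := \tfrac{\partial T_\epsilon}{\partial \cnfx(t)} = \vI + \epsilon\, \tfrac{\partial f}{\partial \cnfx(t)} + o(\epsilon)$; in particular $J_0 = \vI$, so $\det J_\epsilon > 0$ for small $\epsilon$ and the absolute value can be dropped. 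Then one finishes in either of two equivalent ways: expand $\det(\vI + \epsilon A + o(\epsilon)) = 1 + \epsilon\,\tr(A) + o(\epsilon)$ and use $\log(1+x) = x + o(x)$; or invoke Jacobi's formula $\tfrac{d}{d\epsilon}\log\det J_\epsilon = \tr\bigl(J_\epsilon^{-1}\dot J_\epsilon\bigr)$ evaluated at $\epsilon = 0$, where $J_0 = \vI$ and $\dot J_0 = \tfrac{\partial f}{\partial \cnfx(t)}$. The one-sided limit agrees with the two-sided derivative since running the same computation with $\epsilon < 0$ (reversing the flow) yields the same value; collecting signs gives $\tfrac{\partial \log p(\cnfx(t))}{\partial t} = -\tr\bigl(\tfrac{df}{d\cnfx(t)}\bigr)$.

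The main obstacle is the analytic bookkeeping around the $o(\epsilon)$ terms: one must justify that the flow's Jacobian depends differentiably on the initial condition and that the Euler expansion $J_\epsilon = \vI + \epsilon\,\partial f/\partial \cnfx + o(\epsilon)$ holds with a remainder that is genuinely $o(\epsilon)$ — uniformly enough to survive division by $\epsilon$ and the passage to the limit. This is precisely where uniform Lipschitz continuity of $f$ is used, since it gives well-posedness of the variational (sensitivity) equation governing $J_\epsilon$. A secondary and more minor point is verifying that $T_\epsilon$ is a bijection with nonsingular Jacobian for small $\epsilon$, so that \eqref{eq:change of variables} is legitimately applicable; since $J_0 = \vI$ and only small $\epsilon$ is needed, this follows by continuity. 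Everything else — the determinant expansion, the logarithm expansion, and the trace identity — is routine.
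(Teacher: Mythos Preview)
Your proposal is correct and follows essentially the same route as the paper's proof: both start from the finite change of variables formula applied to the flow map $T_\epsilon$, reduce to computing $\lim_{\epsilon\to 0^+}\epsilon^{-1}\log|\det \partial T_\epsilon/\partial \cnfx|$, and extract the trace via the first-order expansion of the Jacobian. The only cosmetic difference is that the paper invokes L'H\^opital's rule and then Jacobi's formula before Taylor-expanding $T_\epsilon$, whereas you expand $J_\epsilon$ first and then either expand $\det(\vI+\epsilon A)$ directly or apply Jacobi at $\epsilon=0$; these are equivalent computations of the same limit.
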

\begin{proof}
To prove this theorem, we take the infinitesimal limit of finite changes of $\log p({\cnfx}(t))$ through time.
First we denote the transformation of ${\cnfx}$ over an $\epsilon$ change in time as%
\begin{align}
{\cnfx}(t+\epsilon) = T_\epsilon({\cnfx}(t))%
\end{align}
We assume that $f$ is Lipschitz continuous in ${\cnfx}(t)$ and continuous in $t$, so every initial value problem has a unique solution by Picard's existence theorem.
We also assume ${\cnfx}(t)$ is bounded.
These conditions imply that $f$, $T_\epsilon$, and $\frac{\partial}{\partial {\cnfx}}T_\epsilon$ are all bounded.
In the following, we use these conditions to exchange limits and products.

We can write the differential equation $\frac{\partial\log p({\cnfx}(t))}{\partial t}$ using the discrete change of variables formula, and the definition of the derivative:
\begin{align}
\frac{\partial\log p({\cnfx}(t))}{\partial t} &= \lim_{\epsilon\rightarrow0^+} \frac{\log p({\cnfx}(t)) - \log \left|\det \frac{\partial}{\partial {\cnfx}} T_\epsilon({\cnfx}(t)) \right| - \log p({\cnfx}(t))}{\epsilon} \\
&= -\lim_{\epsilon\rightarrow0^+} \frac{\log \left|\det \frac{\partial}{\partial {\cnfx}} T_\epsilon({\cnfx}(t)) \right| }{\epsilon} \\
&= -\lim_{\epsilon\rightarrow0^+} \frac{\frac{\partial}{\partial \epsilon}\log \left|\det \frac{\partial}{\partial {\cnfx}} T_\epsilon({\cnfx}(t)) \right| }{\frac{\partial}{\partial \epsilon}\epsilon}
\qquad \qquad \textnormal{(by L'H\^opital's rule)}\\
&= -\lim_{\epsilon\rightarrow0^+} \frac{\frac{\partial}{\partial \epsilon}\left|\det \frac{\partial}{\partial {\cnfx}} T_\epsilon({\cnfx}(t)) \right| }{\left|\det \frac{\partial}{\partial {\cnfx}} T_\epsilon({\cnfx}(t)) \right|}
\qquad \qquad \quad \left( \left. \frac{\partial \log({\cnfx})}{\partial {\cnfx}}\right\vert_{\cnfx = 1} = 1 \right)\\
&= -\underbrace{\left( \lim_{\epsilon\rightarrow0^+} \frac{\partial}{\partial \epsilon}\left|\det \frac{\partial}{\partial {\cnfx}} T_\epsilon({\cnfx}(t)) \right| \right)}_{\text{bounded}} \underbrace{\left( \lim_{\epsilon\rightarrow0^+} \frac{1 }{\left|\det \frac{\partial}{\partial {\cnfx}} T_\epsilon({\cnfx}(t)) \right|}  \right)}_{=1} \\
&= -\lim_{\epsilon\rightarrow0^+} \frac{\partial}{\partial \epsilon}\left|\det \frac{\partial}{\partial {\cnfx}} T_\epsilon({\cnfx}(t)) \right|
\end{align}
The derivative of the determinant can be expressed using Jacobi's formula, which gives
\begin{align}
    \frac{\partial\log p({\cnfx}(t))}{\partial t} &= -\lim_{\epsilon\rightarrow0^+} \tr \left( \text{adj}\left(\frac{\partial}{\partial {\cnfx}}T_\epsilon({\cnfx}(t))\right) \frac{\partial }{\partial \epsilon
    } \frac{\partial }{\partial {\cnfx}
    } T_\epsilon({\cnfx}(t)) \right) \\
    &= -\tr \left( \underbrace{\left(\lim_{\epsilon\rightarrow0^+} \text{adj}\left(\frac{\partial}{\partial {\cnfx}}T_\epsilon({\cnfx}(t))\right)\right)}_{=I} \left( \lim_{\epsilon\rightarrow0^+} \frac{\partial }{\partial \epsilon
    } \frac{\partial }{\partial {\cnfx}
    } T_\epsilon({\cnfx}(t))\right) \right) \\
    &= -\tr \left( \lim_{\epsilon\rightarrow0^+} \frac{\partial }{\partial \epsilon
    } \frac{\partial }{\partial {\cnfx}
    } T_\epsilon({\cnfx}(t))\right)
\end{align}
Substituting $T_\epsilon$ with its Taylor series expansion and taking the limit, we complete the proof.
\begin{align}
    \frac{\partial\log p({\cnfx}(t))}{\partial t}
    &= -\tr \left( \lim_{\epsilon\rightarrow0^+} \frac{\partial }{\partial \epsilon
    } \frac{\partial }{\partial {\cnfx}} \left( {\cnfx} + \epsilon f({\cnfx}(t), t) + \mathcal{O}(\epsilon^2) + \mathcal{O}(\epsilon^3) + \dots \right) \right) \\
    &= -\tr \left( \lim_{\epsilon\rightarrow0^+} \frac{\partial }{\partial \epsilon
    } \left( I + \frac{\partial }{\partial {\cnfx}}\epsilon f({\cnfx}(t), t) + \mathcal{O}(\epsilon^2) + \mathcal{O}(\epsilon^3) + \dots \right) \right) \\
    &= -\tr \left( \lim_{\epsilon\rightarrow0^+} \left(\frac{\partial }{\partial {\cnfx}} f({\cnfx}(t), t) + \mathcal{O}(\epsilon) + \mathcal{O}(\epsilon^2) + \dots \right) \right) \\
    &= -\tr \left( \frac{\partial }{\partial {\cnfx}} f({\cnfx}(t), t)\right)
\end{align}
\end{proof}

\subsection{Special Cases}

\paragraph{Planar CNF.} Let $f({\cnfx}) = uh(w^{\cnfx} + b)$, then $\frac{\partial f}{\partial {\cnfx}} = u \frac{\partial h}{\partial {\cnfx}}\tran{}$. Since the trace of an outer product is the inner product, we have
\begin{align}
\frac{\partial \log p({\cnfx})}{\partial t} = -\tr \left( u \frac{\partial h}{\partial {\cnfx}}\tran{}\right) = -u\tran{} \frac{\partial h}{\partial {\cnfx}}
\end{align}
This is the parameterization we use in all of our experiments.
\paragraph{Hamiltonian CNF.}
The continuous analog of NICE~\citep{dinh2014nice} is a Hamiltonian flow, which splits the data into two equal partitions and is a volume-preserving transformation, implying that $\frac{\partial \log p({\cnfx})}{\partial t}=0$.
We can verify this.
Let
\begin{align}
\begin{bmatrix}
\frac{d{\cnfx}_{1:d}}{dt} \\ \frac{d{\cnfx}_{d+1:D}}{dt}
\end{bmatrix} = \begin{bmatrix}
f({\cnfx}_{d+1:D}) \\ g({\cnfx}_{1:d})
\end{bmatrix}
\end{align}
Then because the Jacobian is all zeros on its diagonal, the trace is zero. This is a volume-preserving flow.

\subsection{Connection to Fokker-Planck and Liouville PDEs}\label{sec:connect_fp}

The Fokker-Planck equation is a well-known partial differential equation (PDE) that describes the probability density function of a stochastic differential equation as it changes with time. We relate the instantaneous change of variables to the special case of Fokker-Planck with zero diffusion, the Liouville equation.

As with the instantaneous change of variables, let $\sol(t) \in \R^D$ evolve through time following $\frac{d\sol(t)}{dt} = f(\sol(t), t)$. Then Liouville equation describes the change in density of $\sol$--\emph{a fixed point in space}--as a PDE,
\begin{equation}\label{eq:liouville}
\frac{\partial p(\sol, t)}{\partial t} = - \sum_{i=1}^D \frac{\partial}{\partial \sol_i} \left[ f_i(\sol, t) p(\sol, t) \right]
\end{equation}
However, \eqref{eq:liouville} cannot be easily used as it requires the partial derivatives of $\frac{p(\sol, t)}{\partial \sol}$, which is typically approximated using finite difference. This type of PDE has its own literature on efficient and accurate simulation~\citep{stam1999stable}.

Instead of evaluating $p(\cdot, t)$ at a fixed point, if we follow the trajectory of a particle $\sol(t)$, we obtain
\begin{equation}
\begin{split}
\frac{\partial p(\sol(t), t)}{\partial t} &= \underbrace{\frac{\partial p(\sol(t), t)}{\partial \sol(t)}\frac{\partial \sol(t)}{\partial t}}_{\text{partial derivative from first argument, } \sol(t)} + \underbrace{\vphantom{\frac{\partial p(\sol(t), t)}{\partial \sol(t)}}\frac{\partial p(\sol(t), t)}{\partial t}}_{\text{partial derivative from second argument, } t} \\
&= \cancel{\sum_{i=1}^D\frac{\partial p(\sol(t), t)}{\partial \sol_i(t)}\frac{\partial \sol_i(t)}{\partial t}}
- \sum_{i=1}^{D} \frac{\partial f_i(\sol(t), t)}{\partial \sol_i} p(\sol(t), t)
- \cancel{\sum_{i=1}^{D} f_i(\sol(t), t) \frac{\partial p(\sol(t), t)}{\partial \sol_i(t)}} \\
&= - \sum_{i=1}^{D} \frac{\partial f_i(\sol(t), t)}{\partial \sol_i} p(\sol(t), t)
\end{split}
\end{equation}
We arrive at the instantaneous change of variables by taking the log,
\begin{equation}\label{eq:arriving_at_icov}
\frac{\partial \log p(\sol(t), t)}{\partial t} = \frac{1}{p(\sol(t), t)} \frac{\partial p(\sol(t), t)}{\partial t} = - \sum_{i=1}^{D} \frac{\partial f_i(\sol(t), t)}{\partial \sol_i}
\end{equation}
While still a PDE, \eqref{eq:arriving_at_icov} can be combined with $\sol(t)$ to form an ODE of size $D+1$,
\begin{equation}
\frac{d}{dt} \begin{bmatrix}
\sol(t) \\ \log p(\sol(t), t)
\end{bmatrix} = \begin{bmatrix}
f(\sol(t), t) \\ -\sum_{i=1}^{D} \frac{\partial f_i(\sol(t), t)}{\partial t}
\end{bmatrix}
\end{equation}
Compared to the Fokker-Planck and Liouville equations, the instantaneous change of variables is of more practical impact as it can be numerically solved much more easily, requiring an extra state of $D$ for following the trajectory of $\sol(t)$. Whereas an approach based on finite difference approximation of the Liouville equation would require a grid size that is exponential in $D$.

\section{A Modern Proof of the Adjoint Method}
\label{sec:modern_adj_proof}

We present an alternative proof to the adjoint method~\citep{pontryagin1962mathematical} that is short and easy to follow.

\subsection{Continuous Backpropagation}
Let $\sol(t)$ follow the differential equation $\frac{d\sol(t)}{dt} = f(\sol(t), t, \theta)$, where $\theta$ are the parameters. We will prove that if we define an adjoint state 
\begin{equation}
\adj(t) = \frac{dL}{d\sol(t)}
\end{equation} 
then it follows the differential equation
\begin{equation}\label{eq:adj_method}
\frac{d\adj(t)}{dt} = - \adj(t)\frac{\partial f(\sol(t), t, \theta)}{\partial \sol(t)} 
\end{equation}
For ease of notation, we denote vectors as row vectors, whereas the main text uses column vectors.

\noindent
The adjoint state is the gradient with respect to the hidden state at a specified time $t$. In standard neural networks, the gradient of a hidden layer $\hidden_t$ depends on the gradient from the next layer $\hidden_{t+1}$ by chain rule
\begin{equation}
\frac{d L}{d \hidden_t} = \frac{d L}{d \hidden_{t+1}} \frac{d \hidden_{t+1}}{d \hidden_{t}}.
\end{equation}
With a continuous hidden state, we can write the transformation after an $\epsilon$ change in time as
\begin{equation}
\sol(t+\epsilon) = \int_t^{t+\epsilon} f(\sol(t), t, \theta) dt + \sol(t) = T_\epsilon(\sol(t), t)
\end{equation}
and chain rule can also be applied
\begin{equation}\label{eq:chain_rule}
\frac{d L}{\partial \sol(t)} = \frac{d L}{d \sol(t+\epsilon)} \frac{d \sol(t+\epsilon)}{d \sol(t)} \quad\quad\text{or}\quad\quad \adj(t) = \adj(t+\epsilon)\frac{\partial T_\epsilon(\sol(t), t)}{\partial \sol(t)}
\end{equation}
The proof of \eqref{eq:adj_method} follows from the definition of derivative:
\begin{align}
\frac{d \adj(t)}{d t} &= \lim\limits_{\epsilon\rightarrow 0^+} \frac{\adj(t+\epsilon) - \adj(t)}{\epsilon} \\
&= \lim\limits_{\epsilon\rightarrow 0^+} \frac{\adj(t+\epsilon) - \adj(t+\epsilon)\frac{\partial }{\partial \sol(t)}T_\epsilon(\sol(t))}{\epsilon} \qquad\qquad\qquad\qquad\qquad\;\quad \;\textnormal{(by Eq \ref{eq:chain_rule})}\\
&= \lim\limits_{\epsilon\rightarrow 0^+} \frac{\adj(t+\epsilon) - \adj(t+\epsilon)\frac{\partial }{\partial \sol(t)}\left( \sol(t) + \epsilon f(\sol(t), t, \theta) + \bigO(\epsilon^2) \right)}{\epsilon} \;\;\;\quad \textnormal{(Taylor series around $\sol(t)$)}\\
&= \lim\limits_{\epsilon\rightarrow 0^+} \frac{\adj(t+\epsilon) - \adj(t+\epsilon)\left( I + \epsilon \frac{\partial f(\sol(t), t, \theta) }{\partial \sol(t)} + \bigO(\epsilon^2) \right)}{\epsilon} \\
&= \lim\limits_{\epsilon\rightarrow 0^+} \frac{ - \epsilon \adj(t+\epsilon)\frac{\partial f(\sol(t), t, \theta) }{\partial \sol(t)} + \bigO(\epsilon^2)}{\epsilon} \\
&= \lim\limits_{\epsilon\rightarrow 0^+} - \adj(t+\epsilon)\frac{\partial f(\sol(t), t, \theta)}{\partial \sol(t)} + \bigO(\epsilon) \\
&= - \adj(t)\frac{\partial f(\sol(t), t, \theta)}{\partial \sol(t)}\label{eq:adjoint_z}
\end{align}
We pointed out the similarity between adjoint method and backpropagation (eq. \ref{eq:chain_rule}). Similarly to backpropagation, ODE for the adjoint state needs to be solved \textit{backwards} in time. We specify the constraint on the last time point, which is simply the gradient of the loss wrt the last time point, and can obtain the gradients with respect to the hidden state at any time, including the initial value.
\begin{equation}\label{eq:gradient_iv}
\underbrace{\adj(t_N) = \frac{dL}{d \sol(t_N)}}_{\text{initial condition of adjoint diffeq.}} \quad\quad\quad\quad \underbrace{\adj(t_0) = \adj(t_N) + \int_{t_N}^{t_0} \frac{d\adj(t)}{dt} \;dt = \adj(t_N) - \int_{t_N}^{t_0} \adj(t)^T \frac{\partial f(\sol(t), t, \theta)}{\partial \sol(t)}}_{\text{gradient wrt. initial value}}
\end{equation}
Here we assumed that loss function $L$ depends only on the last time point $t_N$. If function $L$ depends also on intermediate time points $t_1, t_2, \dots, t_{N-1}$, etc., we can repeat the adjoint step for each of the intervals $[t_{N-1}, t_N]$, $[t_{N-2}, t_{N-1}]$ in the backward order and sum up the obtained gradients.

\subsection{Gradients wrt. $\theta$ and $t$}

We can generalize \eqref{eq:adj_method} to obtain gradients with respect to $\theta$--a constant wrt. $t$--and and the initial and end times, $t_0$ and $t_N$. We view $\theta$ and $t$ as states with constant differential equations and write
\begin{equation}
\frac{\partial \theta(t)}{\partial t} = \mathbf{0} \quad\quad \frac{dt(t)}{dt} = 1
\end{equation}
We can then combine these with $z$ to form an augmented state\footnote{Note that we've overloaded $t$ to be both a part of the state and the (dummy) independent variable. The distinction is clear given context, so we keep $t$ as the independent variable for consistency with the rest of the text.} with corresponding differential equation and adjoint state,
\begin{equation}
\frac{d}{dt} \begin{bmatrix} 
\sol \\ \theta \\ t
\end{bmatrix}(t) = f_{aug}([\sol, \theta, t]) := \begin{bmatrix}
f([\sol, \theta, t]) \\ \mathbf{0} \\ 1
\end{bmatrix}, \;
\adj_{aug} := \begin{bmatrix}
\adj \\ \adj_\theta \\ \adj_t
\end{bmatrix}, \;
\adj_\theta(t) := \frac{dL}{d\theta(t)},\;
\adj_t(t) := \frac{dL}{dt(t)}
\end{equation}
Note this formulates the augmented ODE as an autonomous (time-invariant) ODE, but the derivations in the previous section still hold as this is a special case of a time-variant ODE. The Jacobian of $f$ has the form
\begin{equation}
\frac{\partial f_{aug}}{\partial [\sol, \theta, t]} = \begin{bmatrix}
\frac{\partial f }{\partial \sol } & \frac{\partial f }{\partial \theta } & \frac{\partial f }{\partial t} \\
\mathbf{0} & \mathbf{0} & \mathbf{0} \\
\mathbf{0} & \mathbf{0} & \mathbf{0}
\end{bmatrix}(t)
\end{equation}
where each $\mathbf{0}$ is a matrix of zeros with the appropriate dimensions.
We plug this into \eqref{eq:adj_method} to obtain
\begin{equation}
\frac{d\adj_{aug}(t)}{dt} = - \begin{bmatrix}
\adj(t) & \adj_\theta(t) & \adj_t(t)
\end{bmatrix} \frac{\partial f_{aug}}{\partial [\sol, \theta, t]}(t) = - \begin{bmatrix}
\adj \frac{\partial f}{\partial \sol} &
\adj \frac{\partial f}{\partial \theta} &
\adj \frac{\partial f}{\partial t}
\end{bmatrix}(t)
\end{equation}
The first element is the adjoint differential equation~\eqref{eq:adj_method}, as expected. The second element can be used to obtain the total gradient with respect to the parameters, by integrating over the full interval and setting $\adj_\theta(t_N) = \mathbf{0}$.
\begin{equation}\label{eq:adj_params}
\frac{dL}{d\theta} = \adj_\theta(t_0) = - \int_{t_N}^{t_0} \adj(t)\frac{\partial f(\sol(t), t, \theta)}{\partial \theta} \;dt
\end{equation}
Finally, we also get gradients with respect to $t_0$ and $t_N$, the start and end of the integration interval.
\begin{equation}\label{eq:adj_t}
\frac{dL}{dt_N} = \adj(t_N) f(\sol(t_N), t_N, \theta) \quad\quad \frac{dL}{dt_0} = \adj_t(t_0) = \adj_t(t_N) - \int_{t_N}^{t_0} \adj(t) \frac{\partial f(\sol(t), t, \theta)}{\partial t} \;dt
\end{equation}
Between \eqref{eq:adj_method}, \eqref{eq:gradient_iv}, \eqref{eq:adj_params}, and \eqref{eq:adj_t} we have gradients for all possible inputs to an initial value problem solver.

\section{Full Adjoint sensitivities algorithm}

This more detailed version of Algorithm~\ref{algo1} includes gradients with respect to the start and end times of integration.

\label{sec:full-alg}
\begin{algorithm}[h]
\caption{Complete reverse-mode derivative of an ODE initial value problem}
\label{algo2}
\begin{algorithmic}
\Require dynamics parameters $\theta$, start time $\tstart$, stop time $\tend$, final state $\sol(\tend)$, loss gradient $\nicefrac{\partial L}{\partial \sol(\tend)}$
\State $\frac{\partial L}{\partial \tend} = \frac{\partial L}{\partial \sol(\tend)} \tran{} f(\sol(\tend), \tend, \theta)$ \Comment{Compute gradient w.r.t.\ $\tend$}
\State $s_0 = [\sol(\tend), \frac{\partial L}{\partial \sol(\tend)}, {\vzero}_{|\theta|}, -\frac{\partial L}{\partial \tend}]$ \Comment{Define initial augmented state}
\Function{\textnormal{aug\_dynamics}}{$[\sol(t), \adj(t), \cdot, \cdot], t, \theta$}: \Comment{Define dynamics on augmented state}
\State \textbf{return} $[f(\sol(t), t, \theta), -\adj(t)\tran{} \frac{\partial f}{\partial \sol}, -\adj(t)\tran{} \frac{\partial f}{\partial \theta}, -\adj(t)\tran{} \frac{\partial f} {\partial t}]$ \Comment{Compute vector-Jacobian products}
\EndFunction %
\State $[\sol(t_0), \frac{\partial L}{\partial \sol(t_0)}, \frac{\partial L}{\partial \theta}, \frac{\partial L}{\partial t_0}] = \solvefunc(s_0, \textnormal{aug\_dynamics}, \tend, \tstart, \theta)$ \Comment{Solve reverse-time ODE}
\Ensure $\frac{\partial L}{\partial \sol(t_0)}, \frac{\partial L}{\partial \theta}, \frac{\partial L}{\partial t_0}, \frac{\partial L}{\partial \tend}$ \Comment{Return all gradients}
\end{algorithmic}
\end{algorithm}

\pagebreak

\section{Autograd Implementation}
\label{autograd code}

\begin{lstlisting}
import scipy.integrate

import autograd.numpy as np
from autograd.extend import primitive, defvjp_argnums
from autograd import make_vjp
from autograd.misc import flatten
from autograd.builtins import tuple

odeint = primitive(scipy.integrate.odeint)


def grad_odeint_all(yt, func, y0, t, func_args, **kwargs):
    # Extended from "Scalable Inference of Ordinary Differential
    # Equation Models of Biochemical Processes", Sec. 2.4.2
    # Fabian Froehlich, Carolin Loos, Jan Hasenauer, 2017
    # https://arxiv.org/pdf/1711.08079.pdf

    T, D = np.shape(yt)
    flat_args, unflatten = flatten(func_args)

    def flat_func(y, t, flat_args):
        return func(y, t, *unflatten(flat_args))

    def unpack(x):
        #      y,      vjp_y,      vjp_t,    vjp_args
        return x[0:D], x[D:2 * D], x[2 * D], x[2 * D + 1:]

    def augmented_dynamics(augmented_state, t, flat_args):
        # Orginal system augmented with vjp_y, vjp_t and vjp_args.
        y, vjp_y, _, _ = unpack(augmented_state)
        vjp_all, dy_dt = make_vjp(flat_func, argnum=(0, 1, 2))(y, t, flat_args)
        vjp_y, vjp_t, vjp_args = vjp_all(-vjp_y)
        return np.hstack((dy_dt, vjp_y, vjp_t, vjp_args))

    def vjp_all(g,**kwargs):

        vjp_y = g[-1, :]
        vjp_t0 = 0
        time_vjp_list = []
        vjp_args = np.zeros(np.size(flat_args))

        for i in range(T - 1, 0, -1):

            # Compute effect of moving current time.
            vjp_cur_t = np.dot(func(yt[i, :], t[i], *func_args), g[i, :])
            time_vjp_list.append(vjp_cur_t)
            vjp_t0 = vjp_t0 - vjp_cur_t

            # Run augmented system backwards to the previous observation.
            aug_y0 = np.hstack((yt[i, :], vjp_y, vjp_t0, vjp_args))
            aug_ans = odeint(augmented_dynamics, aug_y0,
                             np.array([t[i], t[i - 1]]), tuple((flat_args,)), **kwargs)
            _, vjp_y, vjp_t0, vjp_args = unpack(aug_ans[1])

            # Add gradient from current output.
            vjp_y = vjp_y + g[i - 1, :]

        time_vjp_list.append(vjp_t0)
        vjp_times = np.hstack(time_vjp_list)[::-1]

        return None, vjp_y, vjp_times, unflatten(vjp_args)
    return vjp_all


def grad_argnums_wrapper(all_vjp_builder):
    # A generic autograd helper function.  Takes a function that
    # builds vjps for all arguments, and wraps it to return only required vjps.
    def build_selected_vjps(argnums, ans, combined_args, kwargs):
        vjp_func = all_vjp_builder(ans, *combined_args, **kwargs)
        def chosen_vjps(g):
            # Return whichever vjps were asked for.
            all_vjps = vjp_func(g)
            return [all_vjps[argnum] for argnum in argnums]
        return chosen_vjps
    return build_selected_vjps

defvjp_argnums(odeint, grad_argnums_wrapper(grad_odeint_all))


\end{lstlisting}

\section{Algorithm for training the latent ODE model}
\label{lode alg}

To obtain the latent representation $\sol_{t_0}$, we traverse the sequence using RNN and obtain parameters of  distribution $q(\sol_{t_0}| \{\vx_{t_i}, t_i\}_i, \theta_{enc})$. The algorithm follows a standard VAE algorithm with an RNN variational posterior and an ODESolve model:
\begin{enumerate}
\item Run an RNN encoder through the time series and infer the parameters for a posterior over $\sol_{t_0}$:
\begin{align} q(\sol_{t_0}| \{\vx_{t_i}, t_i\}_i, \phi) = \mathcal{N}(\sol_{t_0} | \mu_{\sol_{t_0}}, \sigma_{\sol_0}), \end{align}  where
$\mu_{\sol_0}, \sigma_{\sol_0}$ comes from hidden state of  RNN($\{\vx_{t_i}, t_i\}_i, \phi$)
\item Sample $\sol_{t_0} \sim q(\sol_{t_0}| \{\vx_{t_i}, t_i\}_i)$
\item Obtain $\sol_{t_1}, \sol_{t_2}, \dots, \sol_{t_M}$ by solving ODE $\solvefunc(\sol_{t_0}, f, \theta_f, t_0, \dots, t_M)$, where $f$ is the function defining the gradient $d\sol/dt$ as a function of $\sol$
\item Maximize $ \textnormal{ELBO} =   \sum_{i=1}^M \log p(\vx_{t_i} | \sol_{t_i}, \theta_{\obs}) + \log p(\sol_{t_0}) - \log  q(\sol_{t_0}| \{\vx_{t_i}, t_i\}_i, \phi)$, \\
where $p(\sol_{t_0}) = \mathcal{N}(0,1)$
\end{enumerate}

\section{Extra Figures}
\label{seq:extra}
\begin{figure}[h!]
	\centering
	\begin{subfigure}[b]{0.29\linewidth}
		\centering
		\includegraphics[trim={0.7cm 0.7cm 0.7cm 0.9cm},clip,width=1.\textwidth]{plots/ode_30sp_predictions_test_11}\\
		\includegraphics[trim={0.7cm 0.7cm 0.7cm 0.9cm},clip,width=1.\textwidth]{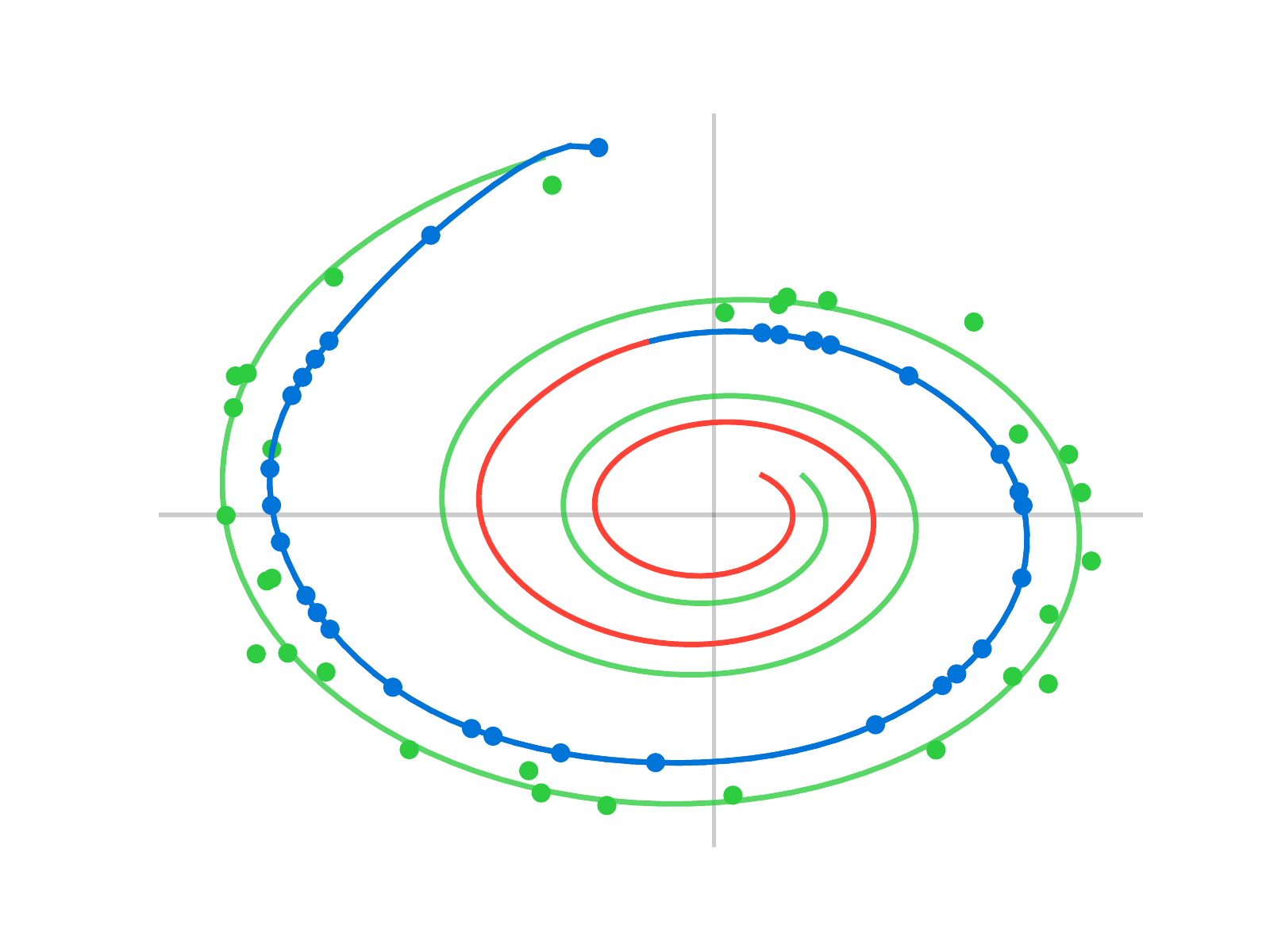}
		\caption{30 time points}
	\end{subfigure}
	\begin{subfigure}[b]{0.29\linewidth}
		\centering
		\includegraphics[trim={0.7cm 0.7cm 0.7cm 0.9cm},clip,width=1.\textwidth]{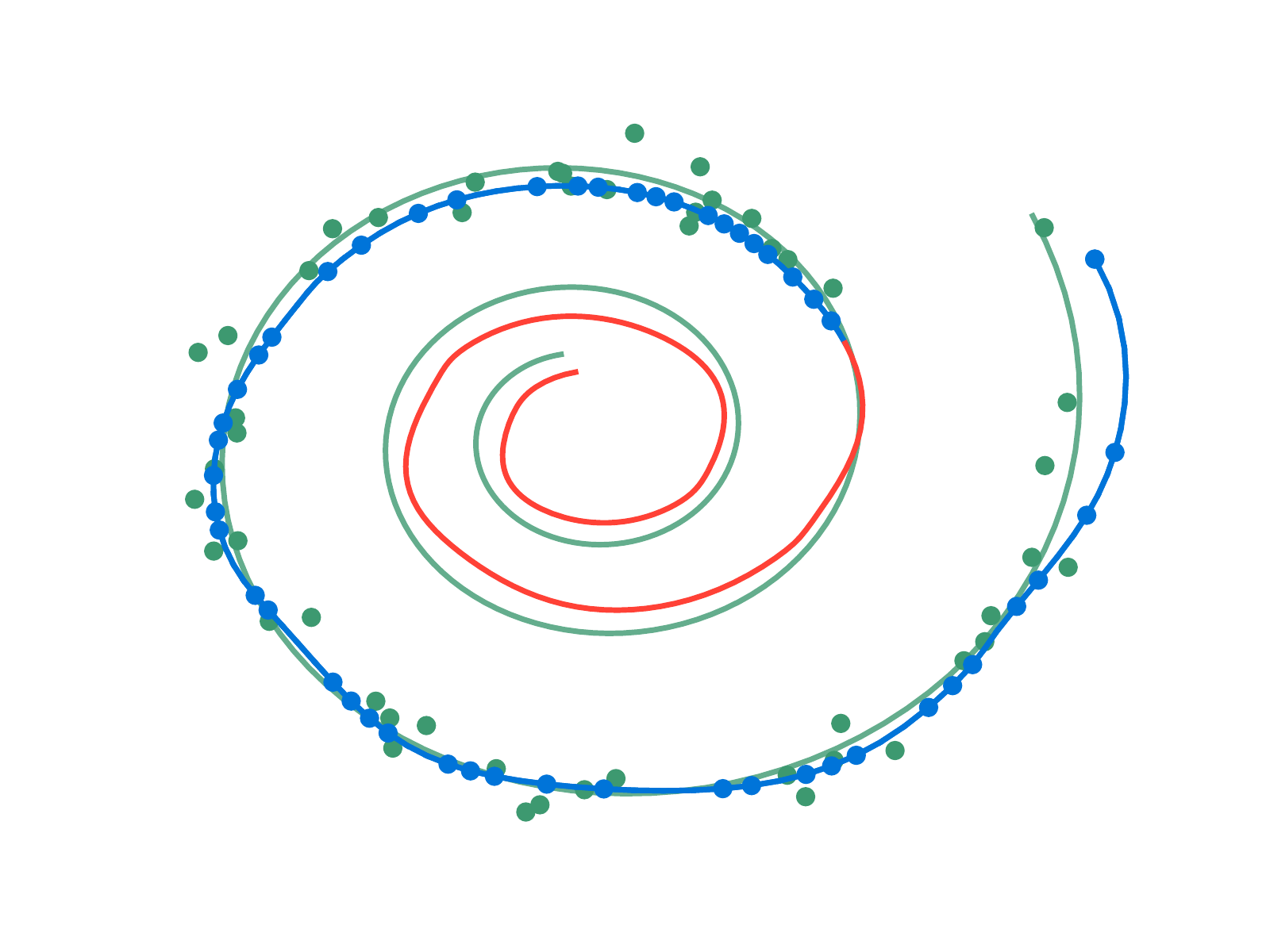}\\
		\includegraphics[trim={0.7cm 0.7cm 0.7cm 0.9cm},clip,width=1.\textwidth]{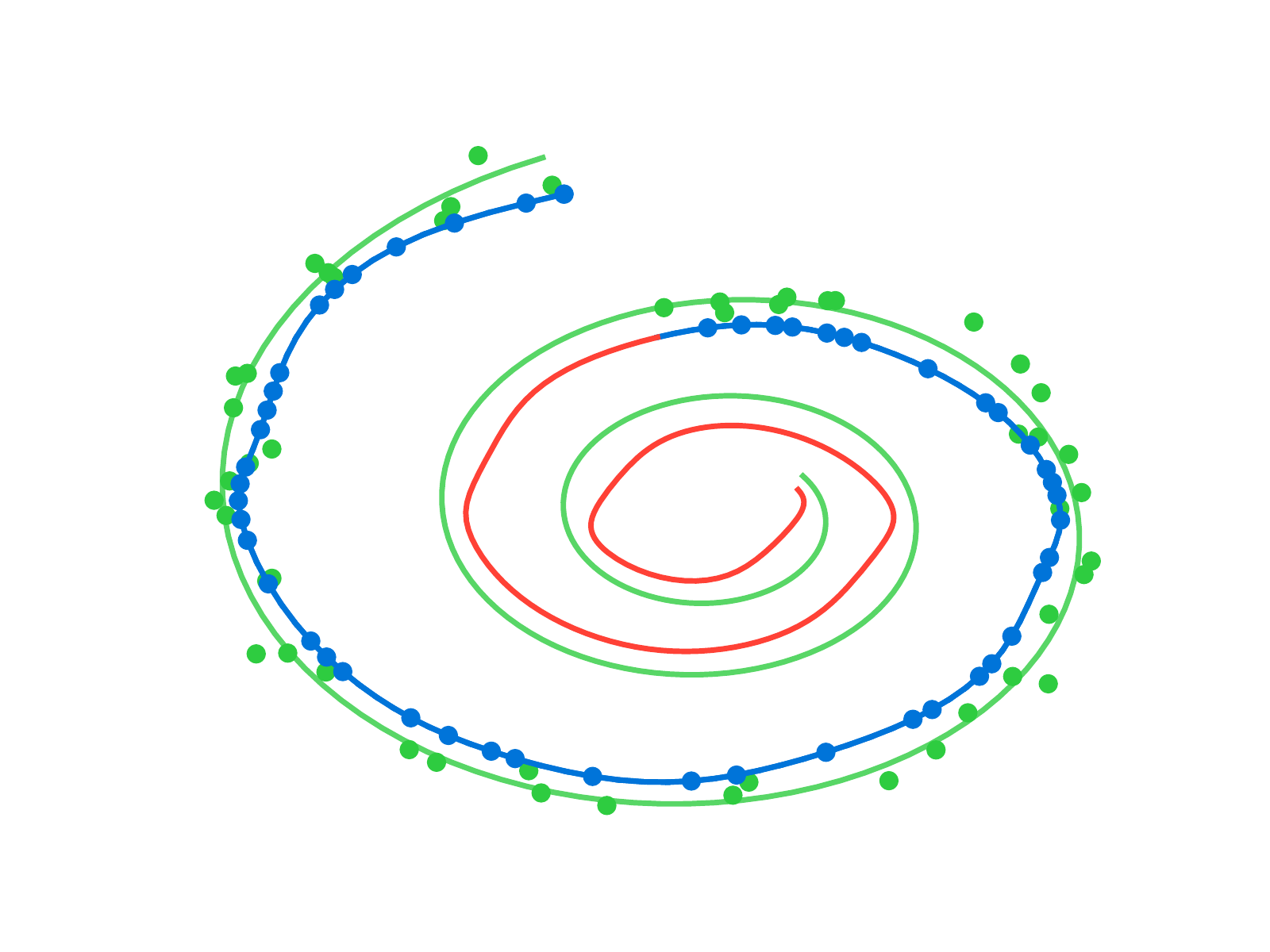}
		\caption{50 time points}
	\end{subfigure}
	\begin{subfigure}[b]{0.29\linewidth}
		\centering
		\includegraphics[trim={0.7cm 0.7cm 0.7cm 0.9cm},clip,width=1.\textwidth]{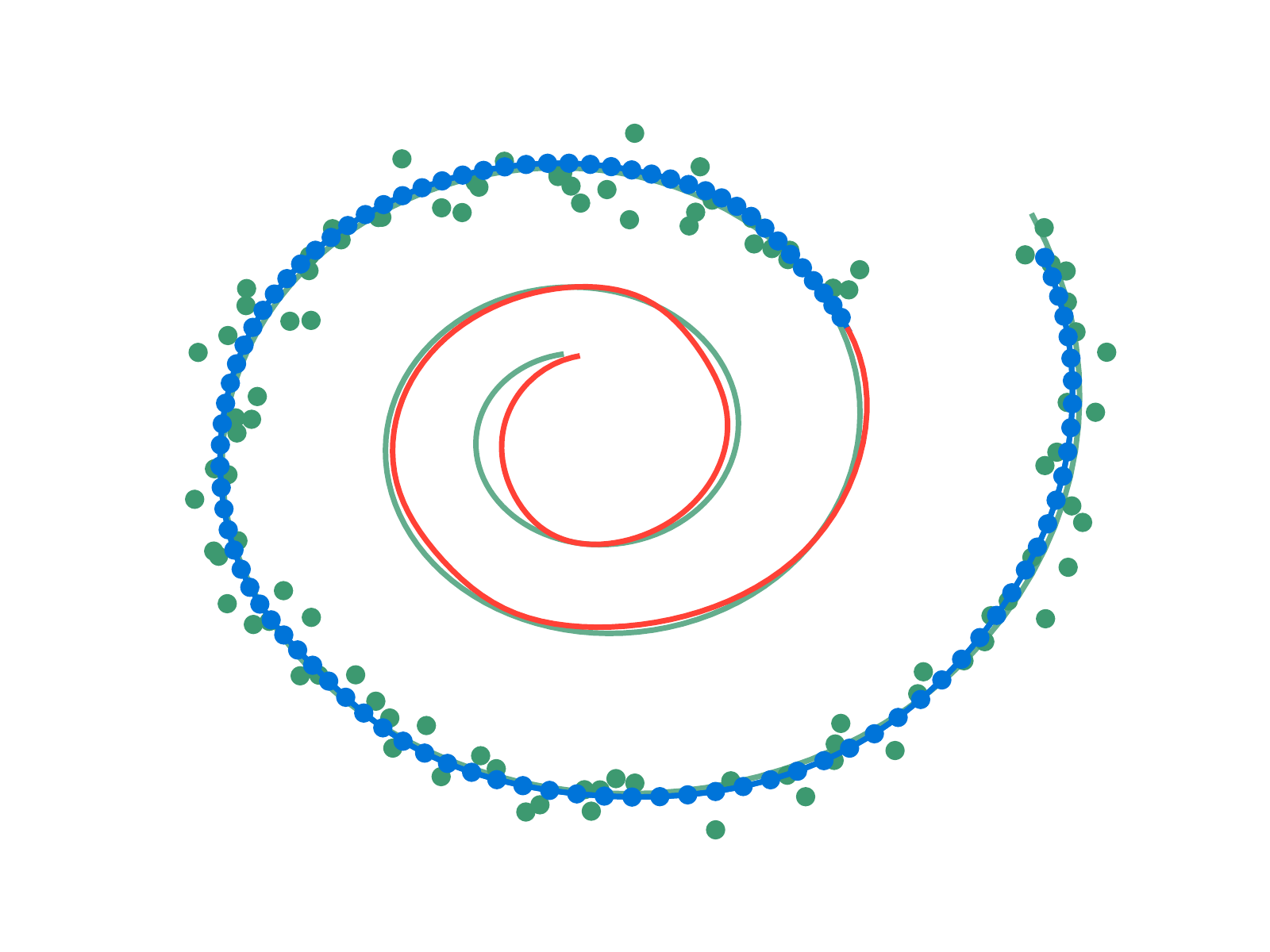}
		\includegraphics[trim={0.7cm 0.7cm 0.7cm 0.9cm},clip,width=1.\textwidth]{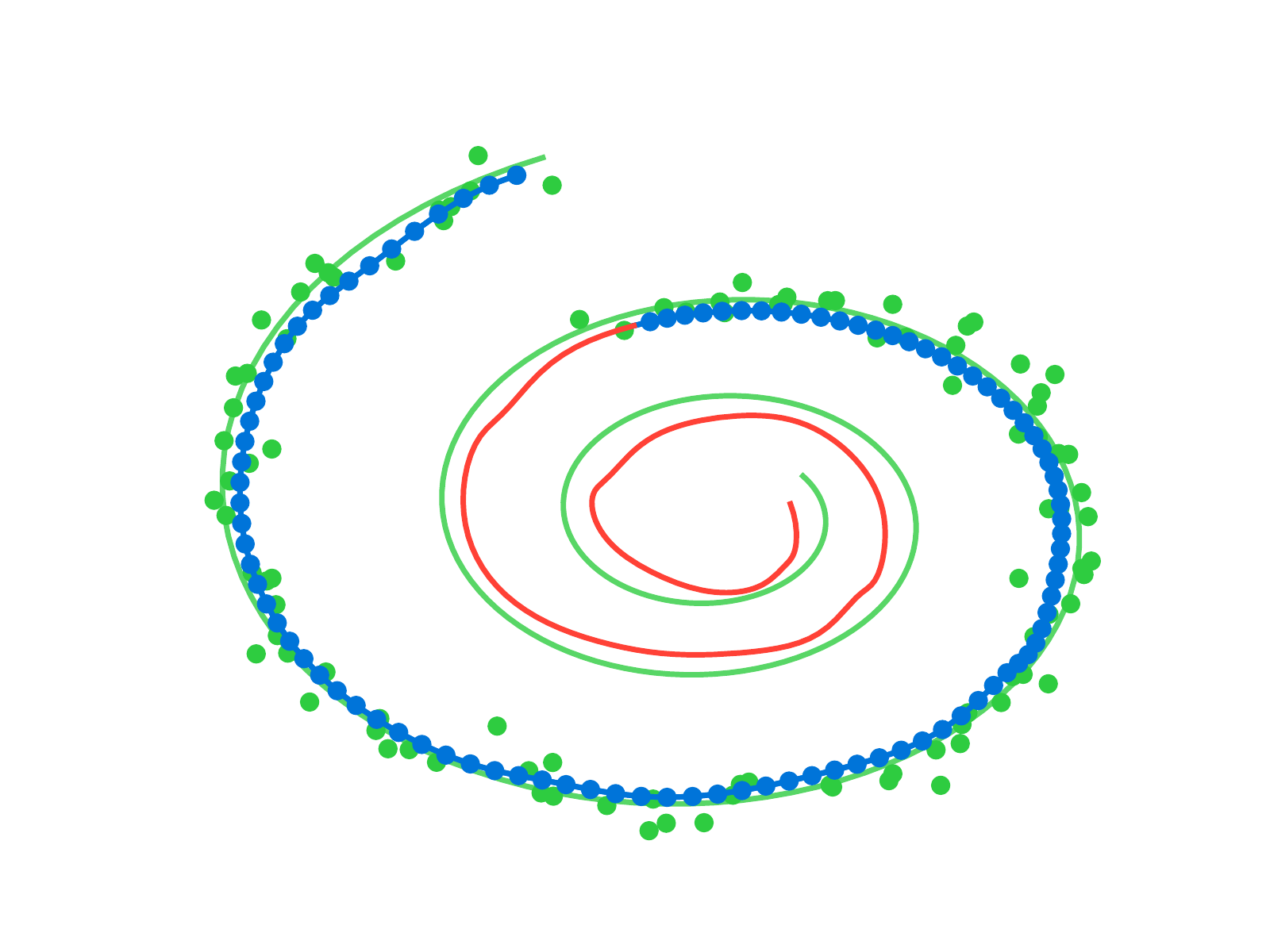}
		\caption{100 time points}
	\end{subfigure}
	\begin{subfigure}[b]{0.11\linewidth}
		\centering
		\includegraphics[trim={1.15cm 1.7cm 1.15cm 1.7cm},clip,width=1.2\textwidth]{plots/pred_legend}\vspace{4cm}
	\end{subfigure}
	\caption{Spiral reconstructions using a latent ODE with a variable number of noisy observations.}
\end{figure}

\end{appendices}

\end{document}